\documentclass{article}

\usepackage[accepted]{tmlr}

\usepackage[utf8]{inputenc}
\usepackage[T1]{fontenc}
\usepackage[hypertexnames=false]{hyperref}
\usepackage{url}
\usepackage{booktabs}
\usepackage{amsfonts}
\usepackage{amsmath}
\usepackage{amssymb}
\usepackage{xcolor}
\usepackage{graphicx}
\usepackage{pifont}

\definecolor{goodgreen}{HTML}{1B7A3E}

\usepackage{algorithm}
\let\AND\relax
\makeatletter
\@ifundefined{AND}{}{\let\AND\relax}
\makeatother
\usepackage{algorithmic}
\usepackage{multirow}
\usepackage{enumitem}

\newcommand{\dt}{\Delta_t}
\newcommand{\tauval}{\tau}
\newtheorem{proposition}{Proposition}
\newcommand{\rev}[1]{#1}

\title{Greedy Decoding Is Not Precision-Invariant: Cross-Precision Output Divergence in LLM Inference%
\thanks{An LLM-based writing assistant was used for language editing and formatting checks. All scientific claims and analyses were verified by the authors.}}

\author{%
\name Gaoyuan Du \email \href{mailto:gaoyuan.du@utk.edu}{gaoyuan.du@utk.edu} \\
\addr University of Tennessee, Knoxville \\[\baselineskip]
\AND
\name Anam Nawaz Khan \email \href{mailto:akhan59@utk.edu}{akhan59@utk.edu} \\
\addr University of Tennessee, Knoxville \\[\baselineskip]
\AND
\name Rex Zhou \email \href{mailto:rexzhe1230@gmail.com}{rexzhe1230@gmail.com} \\
\addr University of Chicago \\[\baselineskip]
\AND
\name Xiaoyang Liu \email \href{mailto:lxaoya@amazon.com}{lxaoya@amazon.com} \\
\addr Amazon \\[\baselineskip]
\AND
\name Deepayan Chakrabarti \email \href{mailto:deepay@utexas.edu}{deepay@utexas.edu} \\
\addr University of Texas at Austin \\[\baselineskip]
\AND
\name Fnu Suya \email \href{mailto:fsuya@utk.edu}{fsuya@utk.edu} \\
\addr University of Tennessee, Knoxville \\[\baselineskip]
\AND
\name Xueping Li \email \href{mailto:Xueping.Li@utk.edu}{Xueping.Li@utk.edu} \\
\addr University of Tennessee, Knoxville
}

\def\month{September}
\def\year{2026}
\def\openreview{\url{https://openreview.net/forum?id=QDOKyg7a5e}}

\begin{document}

\maketitle

\begin{abstract}
Greedy decoding from large language models is commonly treated as deterministic. We show it is not \emph{precision-invariant}: the same model, prompt, and decoding algorithm produce different outputs in BF16 versus FP16 on identical hardware. Across our evaluations of six models \rev{(1.1B--7B parameters, four families; divergence additionally characterised at 12B)} and three benchmarks, \rev{49--100\%} of prompts diverge; a single token flip often cascades into trajectory-level divergence.
We develop an \rev{empirical error-propagation analysis} and find that
\rev{22 layers of accumulated body error do not distinguish flipping from non-flipping steps; the outcome depends primarily on the top-two logit margin at the lm\_head relative to the directional perturbation between the top-two candidates.} \rev{The analysis makes five testable predictions about intervention
outcomes, including that applying \emph{more} FP32 compute (broader scope)
makes agreement \emph{worse}. The experiments match all five predictions.} The best-performing low-overhead intervention we evaluate, selective FP32 lm\_head
recomputation, triggered only when the margin falls below a
threshold, delivers $+22$--$36$pp exact agreement \rev{on A10G ($+12$--$21$pp on L4 and A100)} at $<4\%$ latency overhead \rev{in low-batch (batch size $\leq 4$) single-stream inference}. We map the applicability boundary across six models and four
batch sizes, \rev{and hypothesise that training-time precision stability is a determining factor}. The method is a partial mitigation rather than a universal determinism guarantee: its benefit vanishes when body-originated error dominates, including at batch size $\geq8$ and under end-to-end FP8 in our tests.
\end{abstract}

\section{Introduction}
\label{sec:intro}

Precision switching in LLM serving is a routine operational reality, not a hypothetical concern. \rev{Model hubs distribute a single checkpoint that serving frameworks cast to the deployment format; the same model may be served in BF16 on one replica and FP16 on another depending on library defaults, configuration flags, or operator choices.} The same model, prompt, and greedy-decoding configuration can therefore produce \emph{different} outputs depending on which \rev{numerical format is active}. Under our controlled setup, standard reproducibility tooling (\texttt{torch.use\_deterministic\_algorithms}, fixed CUBLAS workspace) makes repeated runs bit-identical within each precision but does not reconcile outputs across precisions. A system can thus be repeatable within each format while failing cross-format replay. In principle, greedy decoding is deterministic: at each step $t$, the model selects $y_t = \arg\max_v z_t(v)$.

\rev{Why does cross-format agreement matter? First, \emph{reproducibility and auditing}: when exact replay is required, an audit replica running a different numerical format may fail to reproduce the logged token sequence. Second, \emph{evaluation validity}: benchmark scores and A/B tests can conflate model quality with format-induced output drift; two evaluations of the \emph{same} checkpoint can disagree on individual predictions while reporting similar aggregate accuracy. Third, \emph{fidelity to a higher-precision reference}: we show (Appendix~\ref{app:fp32_fidelity}) that our intervention moves the BF16 arm closer to FP32 greedy inference ($+19$--$21$pp sequence agreement). \rev{FP16 alone remains closer to FP32 than BF16$+$C is, so where the serving format is a free choice, FP16 is the better single-format proxy; Intervention~C addresses the case where the format is \emph{not} a free choice.} The relevant pair of formats will change as hardware evolves, but the mechanism---a low top-two margin meeting format-dependent rounding at the lm\_head---can be tested across formats. With FP8 confined to the head projection, a format-rescaled gate achieves exact agreement on TinyLlama and $+56$pp on Qwen2.5-3B. This controlled result is not representative of end-to-end FP8: when every linear layer is quantised, divergence becomes body-dominated and the head-scope repair recovers only $+1$pp (Appendix~\ref{app:fp8}).}

Running the same model with the same prompt under BF16 vs.\ FP16 produces different outputs on $59$--$70\%$ of TinyLlama-1.1B prompts across public benchmarks (GSM8K, HumanEval, MBPP). The rate rises to $82\%$ on Qwen2.5-3B-Instruct and reaches $100\%$ on the tested BF16-saturated Qwen variants. The cross-model intervention table covers six models from $1.1$B to $7$B across four families (Llama, Qwen, Mistral, and OLMoE), and a separate 12.2B experiment characterises divergence without running the full intervention comparison.\rev{\footnote{The six models in Table~\ref{tab:cross_model_sweep} are TinyLlama-1.1B-Chat, Llama-3.2-3B-Instruct, Qwen2.5-3B-Instruct, Mistral-7B-Instruct-v0.3, OLMoE-1B-7B, and DS-R1-Distill-Qwen-7B. The 12.2B probe uses Mistral-Nemo-Instruct-2407. Two additional Qwen variants (Qwen2.5-1.5B-Instruct and Qwen2.5-7B-Instruct) exhibit BF16 saturation that precludes a meaningful intervention comparison.}}

The mechanism is more specific: hidden-state error accumulates \emph{uniformly} through the transformer body (present in every prompt regardless of outcome), and \rev{a token flips when some competitor's differential perturbation exceeds its original logit gap; empirically, such flips concentrate at steps with a small top-two margin at the lm\_head} (Section~\ref{sec:div_cond_trace}). This observation motivates the remedy: selectively recompute the lm\_head in FP32 at steps where the margin falls below a gating threshold $\tauval$.

This paper makes \rev{four} contributions:
\begin{enumerate}
    \item \rev{\textbf{A systematic characterisation of cross-precision output divergence.} We measure the phenomenon primarily at the \emph{output} level across six models ($1.1$--$7$B, four families), four benchmarks, and three GPU microarchitectures: $49$--$100\%$ of greedy generations differ between BF16 and FP16, with a mean length difference of $34$ tokens, and on Qwen2.5-3B GSM8K $19\%$ of prompts flip final-answer correctness. Numerical differences across formats are well known; our measurements quantify their output-level extent and show how one token flip can cascade through an autoregressive trajectory.}
    \item \rev{\textbf{A predictive mechanism for cross-precision greedy divergence.} A token flips between BF16 and FP16 when some competitor's differential perturbation exceeds its original logit gap. Empirically, flips concentrate where the top-two margin is small relative to the per-logit perturbation scale ($\sigma_z \approx 0.026$). Four independent experiments support this characterisation---divergence-conditioned tracing, layer grafting, KV-cache grafting, and logit-lens traceback---all converging on the lm\_head as the dominant amplification stage. The directional projection of body error onto the top-two decision direction supports lm\_head dominance (Section~\ref{sec:directional_body_error}). The mechanism replicates across three tasks and two model families.}

    \item \textbf{Five controlled interventions test the mechanism's predictions.} The mechanism predicts each intervention's outcome \emph{before} the experiment is run: integer quantization and temperature sharpening have zero effect; top-$K$ FP32 recomputation works for all $K \geq 2$; full FP32 lm\_head recomputation matches top-$K$; and extending FP32 scope beyond the lm\_head degrades agreement ($63\%\!\to\!44\%$). All five predictions are confirmed.

    \item \textbf{An applicability map across models, kernels, and batch sizes.} We evaluate the intervention on six models ($1.1$--$7$B, four families), additionally characterise divergence at $12$B, test two attention kernels, and vary batch size (bs) over $\{1, 2, 4, 8\}$ (with composition experiments extending to bs$=16$). The lift ranges from $+36$pp (Llama-3.2-3B) to $0$pp (Qwen BF16-saturated); \rev{effectiveness appears to track training-time precision stability (hypothesis)}. Two boundaries emerge: the remedy survives FlashAttention-2 in the tested setting, but the lm\_head-only scope vanishes at bs$\geq\!8$.\footnote{Throughout this paper, ``Pass@1'' denotes greedy-decoding final-answer correctness on a benchmark---e.g.\ on GSM8K, the extracted final number---and is computed under our bare-prompt format with no system prompt or chat template, then McNemar-tested in matched-pair form between Baseline and Intervention~C. \rev{For HumanEval, we additionally report execution-based pass@1 (Appendix~\ref{app:execution}): execution-outcome agreement far exceeds token-level agreement (100\% vs.\ 32\% on TinyLlama-1.1B; 97.6\% vs.\ 26.2\% on Qwen2.5-3B-Instruct, which solves half the benchmark), and Intervention~C introduces no observed pass@1 degradation on either model.} Pass@1 numbers therefore lower-bound what each model can achieve under its native chat template (Appendix~\ref{app:quality}, the chat-template results).}
\end{enumerate}

\begin{figure}[h]
\centering
\includegraphics[width=0.95\linewidth]{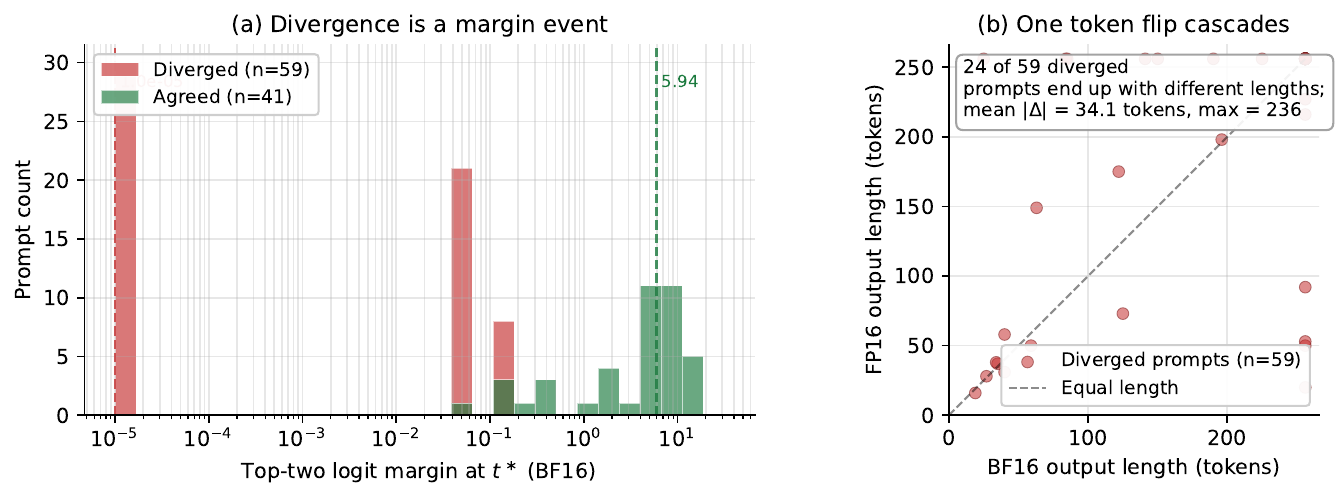}
\caption{Two empirical facts that shape this paper (TinyLlama-1.1B, GSM8K, BF16 vs.\ FP16 greedy decoding, $n\!=\!100$). \textbf{(a)} Distribution of the top-two logit margin at the measurement step $t^\ast$, for prompts whose outputs diverge (red) and for matched agreed controls (green). The two groups are bimodal across more than five orders of magnitude. \textsc{Diverged} median $\approx 10^{-5}$, with $30/59$ exactly at $0$ (perfect BF16 ties); \textsc{Agreed} median $\approx 5.9$. Divergence is a \emph{margin event} at the lm\_head, not an error-magnitude event in the transformer body. \textbf{(b)} Sequence-length scatter of BF16 vs.\ FP16 outputs on the $59$ \textsc{Diverged} prompts. A single token flip typically cascades into trajectory-level divergence: $24/59$ prompts end up with different lengths, with a mean length difference of $34$ tokens and maximum difference of $236$ tokens.}
\label{fig:hero}
\end{figure}

\section{Related work}
\label{sec:related}

As machine learning moves into audited deployments, reproducibility can be a deployment requirement in its own right, not merely a byproduct of accuracy. Floating-point non-determinism in training is classical~\citep{goldberg1991float,higham2002accuracy,zhuang2022randomness}. Framework-level determinism tools~\citep{pytorch2020determinism} are effective within a fixed format but have no effect on cross-precision divergence (Section~\ref{sec:sanity_det_flag}). BF16/FP16 serving is standard~\citep{dettmers2022gptint,micikevicius2018mixed,kalamkar2019bfloat16}, but its output-level consequences and targeted mitigation remain incompletely characterised.

The closest prior work is LayerCast~\citep{layercast2024}, which stores weights in BF16 and casts weight-relevant computation to FP32 layer by layer to reduce system-level numerical variation. \rev{LayerCast also observes a connection between small top-token gaps and numerical output changes; we build on that observation by quantifying the margin's role through grafting and gating experiments, localising the dominant correctable component to the lm\_head, and exploiting it for a selective head-scoped intervention.} We do not implement LayerCast. Our separate global FP32-compute baseline upcasts the BF16/FP16-stored model before inference and adds $+38\%$ latency on A10G, whereas the selective head-scoped intervention adds $<4\%$. \rev{PrecisionDiff~\citep{wang2026precisiondiff} also studies BF16/FP16 output disagreements, systematically identifying \emph{where} precision sensitivity appears across model components. PrecisionDiff characterises the sensitivity surface, while we analyse the decision-level mechanism, quantify each stage's contribution via grafting interventions, and derive a selective repair guided by an exact directional flip condition.} Concurrent work~\citep{he2025thinkingmachines,atil2024nondeterminism,qi2025fp16,liu2025verified,shi2025tokenprob} targets kernel- or batch-level sources orthogonal to format-level divergence, or characterises nondeterminism without localising its mechanism. \citet{messina2026temperature} formalise ``background temperature'' to characterise the effective stochasticity of floating-point perturbations at $T{=}0$; our work complements that framework with a mechanistic diagnosis and a targeted fix. To our knowledge, prior work has not combined output-level characterisation, stage-wise causal interventions, and selective lm\_head recomputation. \rev{Low-precision formats are known to produce numerical differences; we study their \emph{extent} at the output level (\rev{49--100\%} of greedy generations diverge in our evaluations), their \emph{mechanism} (late low-margin events with an lm\_head-dominated correctable component), and their \emph{partial controllability} (a margin-gated intervention recovering 22--36pp of exact agreement on A10G). Prior work on numerical (non)determinism also addresses within-format effects from batching and kernel scheduling~\citep{he2025thinkingmachines,atil2024nondeterminism}, which are distinct from the cross-format comparison studied here.}

\section{Problem formulation}
\label{sec:problem}

Consider an autoregressive LM with vocabulary $V$. At step $t$, the model computes logits $z_t = f_\theta(x, y_{<t})$ and selects $y_t = \arg\max_{v} z_t(v)$~\citep{vaswani2017attention}. In floating-point arithmetic, $z_t$ depends on the numerical precision (BF16, FP16, or FP32), tensor-parallelism degree, and GPU architecture.

We define the \emph{top-two logit gap} $\dt^{(c)} = z_t^{(c)}(v^{(1)}) - z_t^{(c)}(v^{(2)})$ under configuration $c$. When this gap is small enough that floating-point rounding across configurations can flip the argmax, different configurations select different tokens. The autoregressive conditioning then diverges, producing a cascade of length $L_{\text{cascade}} = T - t_{\text{div}}$.

We measure four quantities. The \textbf{Exact Agreement Rate} (EAR) is the fraction of prompts producing identical token sequences across configurations. The \textbf{First Divergence Step} ($t^\ast$) is the position of the first token mismatch. The \textbf{Trigger Rate} is the fraction of steps where $\dt < \tauval$ (used only when a selective intervention applies). The \textbf{Mean Gap} ($\bar{\Delta}$) averages $\dt$ over all decoding steps. For tasks with well-defined final answers, we additionally report \textbf{Semantic Agreement Rate} (SAR)---whether the extracted conclusion matches across configurations even if the full reasoning differs.

\section{Inference-time interventions for cross-precision agreement}
\label{sec:method}

\begin{figure}[h]
\centering
\includegraphics[width=0.88\linewidth, trim=0 100 0 95, clip]{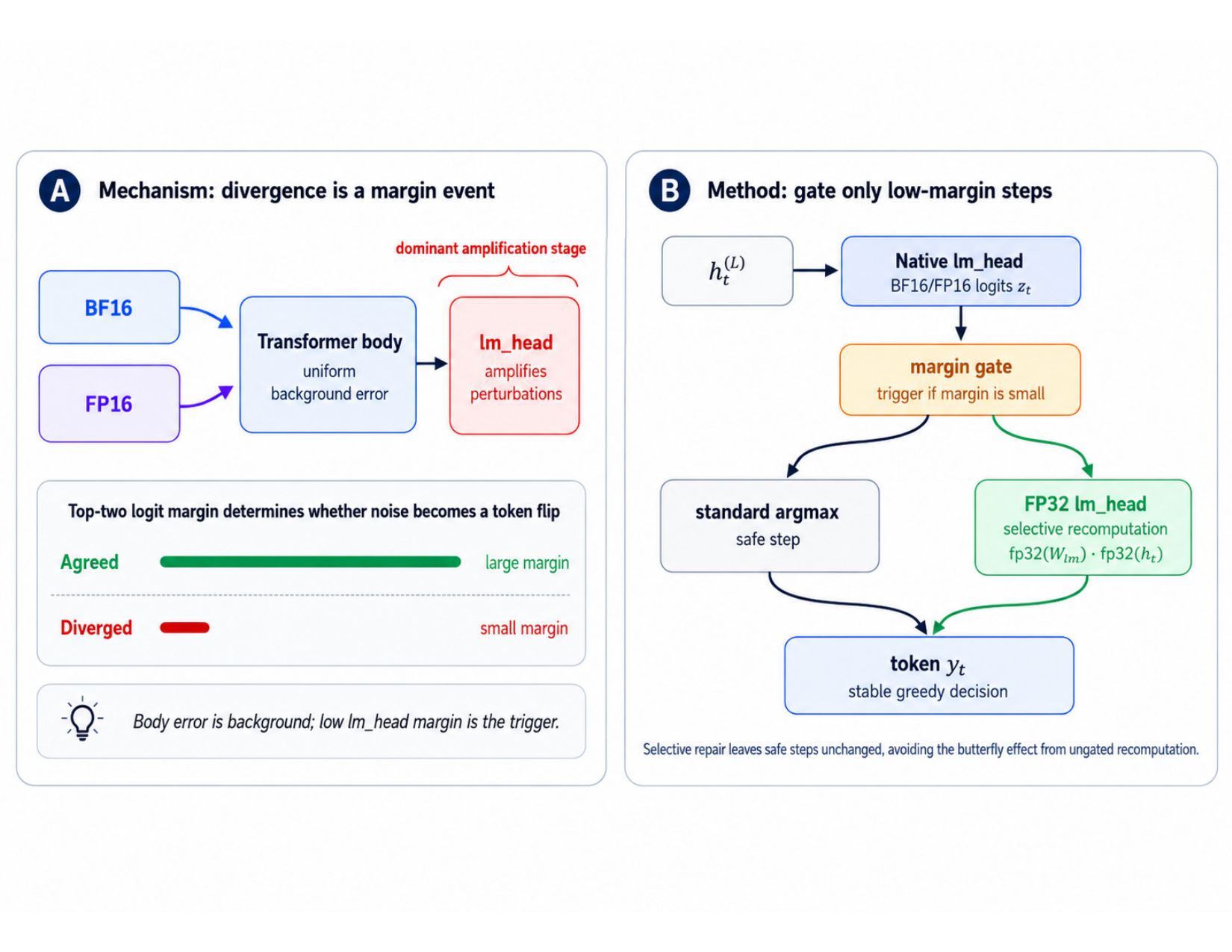}
\vspace{-6pt}
\caption{Mechanism and method. \textbf{(A)}~Cross-precision divergence is a \emph{margin event}: BF16 and FP16 accumulate uniform background error through the transformer body, and the lm\_head---the dominant amplification stage---turns that error into a token flip only when the top-two logit margin is small. Prompts with a large margin agree across precisions; prompts with a small margin diverge. \textbf{(B)}~Our method gates on this margin: the native (BF16/FP16) lm\_head logits $z_t$ are computed first, and only when the margin is small does the gate trigger a selective FP32 lm\_head recomputation ($\mathrm{fp32}(W_{\text{lm}})\!\cdot\!\mathrm{fp32}(h_t)$); large-margin (safe) steps take the standard argmax unchanged. Selective repair (${\sim}1.4\%$ of steps) leaves safe steps untouched, avoiding the butterfly effect of ungated recomputation.}
\label{fig:method_overview}
\end{figure}

The contribution of this paper is a mechanistic diagnosis of cross-precision divergence---it is a top-two margin event localised at the lm\_head---together with a controlled ablation that rules out alternative explanations. We study three inference-time interventions within this ablation: each one modifies only the final decoding step (without retraining, modifying the transformer body, or adding persistent GPU memory) but tests a different hypothesis about the mechanism. Intervention~A asks whether the issue is comparison-level (a tie-breaking artefact between BF16 and FP16 logit values that happen to be representable differently). Intervention~B asks whether a small-$K$ FP32 correction is sufficient, and is a direct empirical probe of the rank-1-vs-rank-2 prediction. Intervention~C asks whether full FP32 lm\_head recomputation eliminates divergence, and is also the most natural deployable fix. The three interventions provide a sequence of mechanism tests: success of A would support a comparison-level explanation; if A fails but B with $K=2$ succeeds, this supports a rank-1-vs-rank-2 correction mechanism in the tested gated setting; if B at $K=2$ matches C at $K=|V|$, recomputing beyond the top two provides no additional benefit in this experiment, supporting an lm\_head-localised correction with a $4000\times$ smaller intervention. Figure~\ref{fig:method_overview} illustrates the shared selective-gating envelope; the three interventions below differ only in what they manipulate inside the gated step.

\paragraph{\rev{Predictions from the analysis.}} An error-propagation \rev{analysis} (Section~\ref{sec:analysis}, $\sigma_z \approx 0.026$) predicts: A has zero effect, B and C produce comparable lifts (rank-1-vs-rank-2 only), and extending scope beyond lm\_head hurts. All predictions confirmed.

\paragraph{Selective triggering.} All interventions fire only when $\dt < \tauval = 10^{-3}$ (0.7--1.4\% of steps), keeping overhead $<\!4\%$.

\subsection{Intervention A: Integer logit quantization}

The argmax over floating-point logits can differ across precisions because BF16 and FP16 represent the same real number with different roundings; integer comparison after quantization is bit-exact. Intervention~A exploits this by quantizing logits into coarse bins of width $\tauval$ before argmax, testing whether divergence is comparison-level:

\begin{equation}
    \tilde{z}_t(v) = \text{round}\!\left(\frac{z_t(v)}{\tauval}\right), \quad y_t = \arg\max_v \tilde{z}_t(v).
\end{equation}
Any two logit values within $\tauval$ of each other map to the same integer. Ties in the integer domain are broken by selecting the smaller token index, ensuring a canonical, platform-independent decision. This is a \emph{lossy} approach that deliberately discards sub-$\tauval$ differences. The cost is negligible: one division and one round per vocabulary element.

\subsection{Intervention B: Top-$K$ FP32 recomputation with integer comparison}

Intervention~A assumes that cross-precision logits are close enough to fall within the same quantization bin. When they are not---i.e., the logit \emph{values} themselves differ across precisions---the quantization step alone cannot reconcile them. Intervention~B addresses this by first correcting the logit values through FP32 recomputation of the top-$K$ candidates, then applying integer quantization for a deterministic final comparison:
\begin{enumerate}[nosep]
    \item Extract the top-$K$ candidate indices $\mathcal{K} = \text{top-}K(z_t)$ from the native-precision logits.
    \item Recompute the lm\_head in FP32 for only those $K$ rows: $\hat{z}_t(\mathcal{K}) = \text{float32}(W_{\text{lm}}[\mathcal{K}]) \cdot \text{float32}(h_t^{(L)})$, where $W_{\text{lm}}[\mathcal{K}] \in \mathbb{R}^{K \times d}$ is the sub-matrix of lm\_head weights corresponding to the $K$ candidates.
    \item Quantize: $\tilde{z}_t(v) = \text{round}(\hat{z}_t(v) / \tauval)$ for $v \in \mathcal{K}$, then integer argmax with canonical tie-break (smallest token index wins).
\end{enumerate}
This is a \emph{hybrid} approach that combines value correction (partial FP32 recompute) with comparison stabilization (integer binning). The cost per trigger is one partial FP32 matrix multiply of size $K \times d$ ($K\!=\!8$, $d\!=\!2048$: $16\text{K}$ FLOPs), plus negligible quantization overhead---$4000\times$ cheaper than Intervention~C's full $|V| \times d$ recompute per trigger. We use $K\!=\!8$ as a conservative default; Appendix~\ref{app:topk_ablation} shows that $K\!=\!2$ suffices (all values $K \in \{2, 4, 8, 16, 32, |V|\}$ produce identical EAR), showing that, for the tested gated cases, correcting the top-two candidates is sufficient to match the full-vocabulary version.

\subsection{Intervention C: Full-vocabulary FP32 lm\_head recomputation}

The dominant source of cross-precision logit perturbation is the lm\_head: a single matrix multiplication $z_t = W_{\text{lm}} \cdot h_t^{(L)}$ projecting the $d$-dimensional hidden state onto the full vocabulary $|V|$. Because $|V|$ is large (32K--152K), this operation amplifies small hidden-state differences by a factor of $\|W_{\text{lm}}\|$. Intervention~C eliminates this amplification by recomputing the projection in FP32 over the full vocabulary:
\begin{equation}
    \hat{z}_t = \text{float32}(W_{\text{lm}}) \cdot \text{float32}(h_t^{(L)}),
\end{equation}
where both activations and weights are up-cast to FP32 on-the-fly, keeping GPU memory constant (no persistent FP32 weight copies). The recomputation scope can in principle extend to the last $L$ transformer layers, trading additional compute for deeper error correction, though our scope ablation (Section~\ref{sec:scope_ablation}) shows this \emph{degrades} agreement in practice. This is the most expensive intervention per trigger but directly targets the dominant error source identified by our layer-level analysis (Section~\ref{sec:layer_tracing}). The cost is one full FP32 matrix multiply over the entire vocabulary ($d \times |V|$).

\begin{algorithm}[t]
\caption{Inference-time intervention for cross-precision agreement}
\label{alg:masd}
\begin{algorithmic}[1]
\REQUIRE Model $f_\theta$, prompt $x$, threshold $\tauval$, max tokens $T$, intervention $\in \{\text{A, B, C}\}$
\FOR{$t = 1$ \TO $T$}
    \STATE $z_t \leftarrow f_\theta(x, y_{<t})$; \quad $v^{(1)}, v^{(2)} \leftarrow \text{top-2}(z_t)$; \quad $\dt \leftarrow z_t(v^{(1)}) - z_t(v^{(2)})$
    \IF{$\dt < \tauval$ \textbf{or} $\tauval = 0$}
        \STATE $y_t \leftarrow \text{Apply}(f_\theta, x, y_{<t}, z_t, \text{A/B/C})$
    \ELSE
        \STATE $y_t \leftarrow v^{(1)}$
    \ENDIF
    \IF{$y_t = \texttt{<eos>}$} \STATE \textbf{break} \ENDIF
\ENDFOR
\end{algorithmic}
\end{algorithm}

\subsection{Overhead and trade-off analysis}

Table~\ref{tab:strategy_tradeoff} summarizes the three interventions.

\begin{table}[t]
\caption{Inference-time interventions for cross-precision agreement. Cost is per-trigger overhead relative to a standard decoding step.}
\label{tab:strategy_tradeoff}
\centering
\small
\setlength{\tabcolsep}{5pt}
\begin{tabular}{llll}
\toprule
Intervention & Approach & Per-Trigger Cost & Mechanism \\
\midrule
A. Integer quantization & Reduce precision & $O(|V|)$ div+round & Merge small-gap logits \\
B. Top-$K$ FP32 + int compare & Increase then reduce & $O(K \cdot d)$ FP32 matmul & FP32 recompute + int merge \\
C. Full-vocab FP32 lm\_head & Increase precision & $O(d \cdot |V|)$ FP32 matmul & FP32 recompute of lm\_head \\
\bottomrule
\end{tabular}
\end{table}

With empirical trigger rates of 0.7--1.4\%, all interventions add $<$4\% latency under selective gating. The key methodological question is whether cross-precision divergence is caused by \emph{small} logit differences (within $\tauval$, addressable by integer quantization) or \emph{large} differences (ranking reversals, requiring FP32 recomputation). Our experiments answer this definitively.

We additionally evaluate two supplementary baselines in Appendix~\ref{app:additional_baselines}: \textbf{(D)~Temperature sharpening} (multiply logits by $\alpha\!>\!1$ before argmax, zero cost) and \textbf{(E)~Consensus decoding} (run BF16 and FP16 in parallel, FP32 tiebreak on disagreement, ${\sim}2\times$ cost). Neither changes the cost--agreement frontier established by A--C: sharpening has zero effect (confirming the value-level nature of divergence), and consensus achieves perfect agreement only at double the compute budget.

\section{Experimental setup}
\label{sec:setup}

\paragraph{Public benchmarks.}
Our primary evaluation uses three public benchmarks: GSM8K~\citep{cobbe2021gsm8k} (math, 8-shot CoT), HumanEval~\citep{chen2021evaluating} (code, 0-shot), and MBPP~\citep{austin2021program} (code, 3-shot).

\paragraph{Models.}
Our primary experiments (Sections~\ref{sec:divergence}--\ref{sec:strategy_results}) use TinyLlama-1.1B-Chat~\citep{zhang2024tinyllama} as the main testbed due to its fast iteration time and ability to run full ablations on a single GPU. For model sensitivity analysis (Section~\ref{sec:model_sensitivity}), we compare reasoning-tuned models (DeepSeek-R1-Distill-Qwen-1.5B/7B~\citep{deepseek2024r1}) against instruction-tuned models (Qwen2.5-1.5B/7B-Instruct~\citep{qwen2024qwen25}).

\paragraph{Configuration.}
We compare BF16 vs.\ FP16 greedy decoding on the same GPU (A10G 22GB). Max generation: 256 tokens. Fixed seeds throughout. To isolate numerical-format effects from PyTorch-level kernel non-determinism, we also run a sanity-check ablation (Section~\ref{sec:sanity_det_flag}) that varies \texttt{torch.use\_deterministic\_algorithms} across \{False, True\} with \texttt{CUBLAS\_WORKSPACE\_CONFIG=:4096:8} and \texttt{cudnn.benchmark=False}, confirming that under greedy decoding with batch size 1 and fixed seed, PyTorch kernels are already self-deterministic within a single precision and the flag has no effect on cross-precision divergence.

\paragraph{Why BF16 vs.\ FP16.} We focus on BF16 vs.\ FP16 because both are common 16-bit inference formats on modern GPUs~\citep{dettmers2022gptint,kwon2023vllm,nvidia2023trtllm}, yet they allocate exponent and mantissa bits differently ($8$/$7$ versus $5$/$10$). This pair provides a controlled test of representation-dependent output divergence. We directly test transfer to head-isolated FP8/BF16 and to FP16/FP32 on T4; applying the methodology to other format pairs remains future work.

\paragraph{Baselines.}
Greedy BF16/FP16 (standard serving precision), Greedy FP32 (oracle upper bound---eliminates all precision-induced divergence), global FP32 compute (upcast the BF16/FP16-stored model before inference), and ungated FP32 recomputation (Intervention~C with $\tauval=0$, recomputing every step without selective gating).

\section{Results}
\label{sec:results}

Unless otherwise noted, all results in this section use TinyLlama-1.1B-Chat with BF16 vs.\ FP16 greedy decoding ($n\!=\!100$ per dataset). Additional results with DeepSeek-R1 and Qwen2.5 models appear in Section~\ref{sec:model_sensitivity}.

\subsection{Sanity check: kernel-level determinism is not the cause}
\label{sec:sanity_det_flag}

Before attributing divergence to numerical format, we rule out two alternative explanations:

\paragraph{(1) PyTorch kernel non-determinism.} Toggling \texttt{torch.use\_deterministic\_algorithms}, \texttt{CUBLAS\_WORKSPACE\_CONFIG}, and \texttt{cudnn.benchmark} between ``loose'' and ``strict'' settings in independent subprocesses produces \emph{bit-identical} outputs within each precision: BF16 self-consistency is $100\%$ and the $41\%$ BF16-vs-FP16 EAR persists identically across both arms. The pattern generalises to MoE (OLMoE-1B-7B) and batch sizes $\{1, 4, 8, 16, 32\}$ (Appendix~\ref{app:sanity_ext}).

\paragraph{(2) GEMM kernel selection difference.} cuBLAS may select different GEMM algorithms for BF16 vs.\ FP16 matmuls. Two observations rule this out. First, BF16 self-consistency is $100\%$, so kernel choice is deterministic within a dtype. Second, running the same BF16 weights under native BF16 vs.\ FP32 compute (different kernels, same weights) yields $42\%$ EAR, indistinguishable from the $41\%$ baseline where both dtype and kernel differ. The divergence is caused by BF16's arithmetic rounding, not by kernel dispatch.

\subsection{Divergence is substantial across tasks}
\label{sec:divergence}

Table~\ref{tab:divergence_spectrum} summarizes BF16-vs-FP16 divergence across all datasets.

\begin{table}[t]
\caption{BF16 vs.\ FP16 divergence spectrum (TinyLlama-1.1B, $n\!=\!100$). Exact agreement (EAR) ranges from 30\% to 41\%. 95\% CI: Wilson score interval.}
\label{tab:divergence_spectrum}
\centering
\small
\begin{tabular}{llcc}
\toprule
Dataset & Domain & EAR (\%) & 95\% CI \\
\midrule
MBPP & Code & 30.0 & $[21.9, 39.6]$ \\
HumanEval & Code & 36.0 & $[27.3, 45.8]$ \\
GSM8K & Math & 41.0 & $[31.9, 50.8]$ \\
\bottomrule
\end{tabular}
\end{table}

On code generation tasks, 64--70\% of prompts produce different outputs between BF16 and FP16. On math reasoning, 59\% diverge.

\paragraph{Where does divergence happen?} The first divergence step $t^\ast$ has a median of $34$--$50$ tokens on non-Qwen models (the divergence-step analysis), with $44\%$ diverging in the first 30 tokens and a long tail past token~200. Qwen2.5-7B-Instruct is catastrophically early-biased (median $t^\ast\!=\!1$, $100\%$ first-token divergence), consistent with training-time BF16 saturation (Section~\ref{sec:model_sensitivity}). The left-skewed distribution motivates per-step gating: most divergences are localised to a small fraction of steps identifiable by the top-two margin.

\subsection{Intervention comparison: a controlled ablation}
\label{sec:strategy_results}

Table~\ref{tab:strategy_cmp} presents the central experiment: all three interventions and baselines on BF16 vs.\ FP16 ($\tauval\!=\!10^{-3}$).

\begin{table}[t]
\caption{Intervention and baseline comparison: BF16 vs.\ FP16 exact agreement (\%) (TinyLlama-1.1B, $n\!=\!100$, $\tauval\!=\!10^{-3}$). Gated FP32 lm\_head recomputation (B, C) achieves the best efficiency--agreement trade-off among the tested methods. \rev{All per-cell $95\%$ Wilson CIs have half-width ${\pm}9$--$10$pp at $n\!=\!100$; because Baseline and Intervention~C are evaluated on the \emph{same} prompts, the relevant tests are the matched-pair McNemar ($p<0.001$) and the paired bootstrap CI on the lift ($[+11,+33]$pp, Appendix~\ref{app:bootstrap}) rather than the per-cell intervals.} Full CIs for each cell are reported in Appendix~\ref{app:cis}. Appendix~\ref{app:scaled_n} replicates the Baseline / Intervention~C rows at $n\!=\!300$ (GSM8K), full-benchmark HumanEval ($n\!=\!164$), and full-split MBPP ($n\!=\!257$) to confirm the lift survives at larger sample sizes (all $n\!=\!100$ cells sit inside the scaled CIs). Appendix~\ref{app:bootstrap} additionally reports $10\,000$-sample bootstrap confidence intervals for every headline lift in the paper; the TinyLlama $+22$pp EAR gain has a $90\%$ bootstrap CI of $[+11, +33]$pp.}
\label{tab:strategy_cmp}
\centering
\small
\begin{tabular}{lcccc}
\toprule
Method & GSM8K & HumanEval & MBPP & Overhead \\
\midrule
Baseline (no intervention) & 41.0 & 36.0 & 30.0 & --- \\
\addlinespace
\multicolumn{5}{l}{\emph{Baselines}} \\
Greedy FP32 (oracle) & 100.0 & 100.0 & 100.0 & ${\sim}2\times$ mem \\
Global FP32 compute & 88.0 & 88.0 & 89.0 & ${\sim}2\times$ mem \\
Ungated FP32 recompute ($\tauval\!=\!0$) & 60.0 & 56.0 & 54.0 & ${\sim}2.5\times$ \\
\addlinespace
\multicolumn{5}{l}{\emph{Gated interventions ($\tauval\!=\!10^{-3}$)}} \\
A.\ Integer quantization & 41.0 & 36.0 & 30.0 & $<$1\% \\
B.\ Top-$K$ FP32 + integer compare & \textbf{63.0} & \textbf{61.0} & \textbf{55.0} & $<$2\% \\
C.\ Full-vocab FP32 lm\_head & \textbf{63.0} & \textbf{61.0} & \textbf{55.0} & $<$4\% \\
\bottomrule
\end{tabular}
\end{table}

Five findings emerge, building from root cause to practical recommendation:

\textbf{(1)~The FP32 oracle confirms the root cause.} Running both configurations in FP32 yields $100\%$ agreement: all divergence originates from precision differences, not from non-determinism in the decoding algorithm.

\textbf{(2)~Global FP32 compute achieves high agreement but at high cost.} Global FP32 upcasting reaches 88--89\% agreement---the highest among all methods short of the FP32 oracle---but doubles memory footprint, preventing deployment on the same hardware for larger models. Gated FP32 lm\_head recomputation (Interventions~B/C) achieves lower absolute agreement (55--63\%) but with $<$4\% overhead, making it practical for production serving.

\textbf{(3)~Ungated recomputation ($\tauval{=}0$) underperforms gating.} Recomputing at every step achieves only 54--60\% agreement at ${\sim}2.5\times$ latency, \emph{worse} than gated Interventions~B/C (55--63\% at ${<}4\%$ overhead). Global recomputation changes the logit landscape at safe steps, introducing new divergence points downstream (Section~\ref{sec:analysis}).

\textbf{(4)~Integer quantization has zero effect.} Intervention~A produces identical agreement to the baseline across all datasets, confirming that cross-precision divergence is caused by \emph{value-level} logit differences (BF16 and FP16 compute different numbers), not \emph{comparison-level} ties (same numbers compared differently by argmax).

\textbf{(5)~Gated FP32 recomputation is the best trade-off.} Interventions~B and~C achieve identical improvements (+22pp on GSM8K, +25pp on HumanEval and MBPP) with $<$4\% overhead, outperforming all baselines on the efficiency--agreement frontier. Gated FP32 lm\_head recomputation also raises SAR from 58\% to 72\% on GSM8K. The ${<}4\%$ overhead is backed by direct measurement on a single A10G (Appendix~\ref{app:overhead}). Gated C adds $+1.4\%$ latency and $+11\%$ peak memory versus the BF16 baseline; the global FP32-compute baseline adds $+38\%$ latency under the same measurement.

\paragraph{Where does the $+22$pp come from?} Table~\ref{tab:intervention_scope} stratifies the $100$ GSM8K prompts into four mutually exclusive categories based on whether gated~C triggers and whether the flip reproduces on single-step re-run. Gated~C fires on exactly $30$ prompts (those with $\dt\!<\!\tauval\!=\!10^{-3}$ at the lm\_head), and empirically fixes $22$ of them ($73\%$ success rate on triggered prompts). The $29$ prompts in the ``margin $\geq\!\tauval$'' category are where the flip is driven by accumulated upstream error---gated~C correctly declines to trigger there, because a single-step FP32 recomputation cannot reconcile two hidden states that have already drifted apart. This stratification predicts the exact headline lift ($41\%\!\to\!63\%$) without tuning and explains both where the method applies and where it does not.

\begin{table}[t]
\centering
\caption{Per-prompt stratification of when gated Intervention~C applies (TinyLlama-1.1B, GSM8K, $n\!=\!100$, $\tauval\!=\!10^{-3}$). Categories are mutually exclusive. The main-text $+22$pp EAR improvement (Table~\ref{tab:strategy_cmp}) arises because C fires on $30$ prompts (B$+$C) and successfully converts $22$ of them ($22/30\!\approx\!73\%$ of triggered prompts). Category~D ($29$ prompts) is where the flip is driven by large-error accumulation and the lm\_head margin is already decisive at $t^\ast$; gated C correctly does not trigger there.}
\label{tab:intervention_scope}
\small
\begin{tabular}{lccccccc}
\toprule
Prompt category & Count & \multicolumn{6}{c}{Distribution by $t^\ast$ (first divergence step)} \\
\cmidrule(lr){3-8}
 & & $t^\ast\!=\!0$ & $1$--$10$ & $11$--$50$ & $51$--$100$ & $>\!100$ & --- \\
\midrule
AGREED (no intervention needed) & 41 & \multicolumn{6}{c}{---} \\
Gated, single-step flip reproducible (C most effective) & 25 & 4 & 4 & 10 & 5 & 2 & --- \\
Gated, flip KV-cache-dependent (C may not fix) & 5 & --- & 2 & 1 & 1 & 1 & --- \\
Margin $\geq \tau$ (C does not trigger) & 29 & --- & 4 & 8 & 9 & 8 & --- \\
\bottomrule
\end{tabular}
\end{table}

\paragraph{Robustness checks.} We characterise divergence across six models spanning four families (Llama, Qwen, Mistral, and OLMoE-MoE; 1.1B--7B), with an additional divergence-only probe at 12B. Intervention lifts are $+36$pp on Llama-3.2-3B, $+8$--$10$pp on Mistral-7B and OLMoE, $+3$pp on Qwen2.5-3B, and $0$pp on the BF16-saturated Qwen variants. \rev{Effectiveness appears to track training-time precision stability (hypothesis; not directly validated), rather than scale alone.} The phenomenon and intervention persist with FlashAttention-2 and through 1024-token reasoning chains on MATH-500. We find no statistically significant pass@1 degradation in the matched-pair tests; a TOST analysis with margin $\delta\!=\!0.05$ establishes non-inferiority on Qwen2.5-3B at $n\!=\!600$ ($p\!=\!0.009$), while the TinyLlama result is a low-accuracy consistency check rather than informative quality evidence. C's lift degrades from $+17$pp at bs${=}1$ to $0$pp at bs${=}8$; composition with global FP32 compute recovers $+5$pp at bs${=}8$. Digit tokens are $8\times$ over-represented among flipped tokens.

\rev{
\begin{table}[t]
\centering
\caption{TOST non-inferiority analysis (GSM8K, $\delta\!=\!0.05$, $n$ per row; bare ``Solve step by step'' prompt format, hence absolute accuracies are lower than the chat-template numbers of Table~\ref{tab:qp} and are comparable only within this table). We test $H_0$: IntC accuracy $\leq$ Baseline accuracy $-\,\delta$. Rejection ($p < 0.05$) establishes that IntC is no worse than $5$pp below baseline. \rev{Interpretation note: TinyLlama's baseline accuracy ($2.3\%$) is itself below the $\delta\!=\!0.05$ margin, so a $5$pp shortfall is arithmetically impossible there and its rejection should be read as a consistency check rather than as evidence; Qwen2.5-3B ($11.7\%$) is the informative row. Paired McNemar on the same Qwen run is non-significant ($\chi^2 = 0.14$, $p = 0.708$; $b\!=\!34$ discordant pairs favouring baseline, $c\!=\!30$ favouring Intervention~C).} \rev{Qwen2.5-3B was re-run at $n{=}600$ to raise statistical power: non-inferiority is established ($p = 0.009$), with the accuracy difference's $95\%$ CI $[-4.3, +2.9]$pp lying entirely inside the $\pm 5$pp margin.}}
\label{tab:tost}
\small
\begin{tabular}{lcccccc}
\toprule
Model & $n$ & Acc (Baseline) & Acc (IntC) & Diff & TOST $p$ & Non-inferior? \\
\midrule
TinyLlama-1.1B & 300 & $2.3\%$ & $2.0\%$ & $-0.3$pp & $4.3\!\times\!10^{-5}$ & \textbf{Yes} \\
Qwen2.5-3B & \rev{600} & \rev{$11.7\%$} & \rev{$11.0\%$} & \rev{$-0.7$pp} & \rev{$\mathbf{0.009}$} & \rev{\textbf{Yes}} \\
\bottomrule
\end{tabular}
\end{table}
}

\paragraph{Counter-intuitive predictions confirmed by the mechanism.} \rev{Body-layer error does not discriminate between flipping and non-flipping prompts}: the $22$-layer TinyLlama and the $36$-layer Qwen2.5-3B have high divergence rates ($59\%$ and $82\%$) despite different body-error budgets. In these measurements, the lm\_head margin distribution is more predictive of a flip than layer count or body-error magnitude.

\subsection{Semantic agreement and gating sensitivity}
\label{sec:sar_and_gating}

Two findings extend the headline picture along orthogonal axes (full data in appendix).

\paragraph{Semantic exceeds exact agreement.} On GSM8K, $58\%$ of prompts produce the same final numerical answer despite different reasoning chains (SAR), compared to $41\%$ exact token match (EAR); Intervention~C lifts SAR to $72\%$. Token differences also need not imply functional differences on code: on Qwen2.5-3B HumanEval, execution outcomes agree on $97.6\%$ of problems while full token sequences agree on only $26.2\%$ (Appendix~\ref{app:execution}). Exact agreement therefore measures a stricter property than task-outcome agreement.

\paragraph{Downstream evaluation impact.} Cross-precision divergence can be hidden by aggregate benchmark accuracy. On Qwen2.5-3B-Instruct (GSM8K, $n\!=\!100$), $19\%$ of prompts \emph{flip correctness} between BF16 and FP16: $9$ prompts are correct only in BF16 and $10$ only in FP16, while aggregate accuracy differs by only $1$pp ($27\%$ vs.\ $28\%$). On Llama-3.2-3B-Instruct with the native chat template ($66\%$ accuracy), correctness flips drop to $2\%$. These two cases show that similar aggregate scores can mask different per-prompt outcomes; determining how this effect scales with model accuracy requires broader evaluation.

\paragraph{Selective triggering is essential.} On DeepSeek-R1-1.5B ($n\!=\!200$), gated C ($\tauval\!\in\![10^{-4}, 10^{-2}]$) achieves $+6.5$pp with $<\!4\%$ overhead, while ungated recomputation ($\tauval\!=\!0$) achieves only $+2.5$pp with $2.5\times$ latency---the ``butterfly effect'' makes blanket recomputation counterproductive (mechanism explained in Section~\ref{sec:analysis}). The threshold is insensitive across two orders of magnitude, consistent with the bimodal margin distribution.

\subsection{Model sensitivity: reasoning vs.\ instruction tuning}
\label{sec:model_sensitivity}

At matched parameter scales, reasoning-tuned models show $10$--$15$pp lower exact agreement and $1.7\times$ higher trigger rates than instruction-tuned counterparts (DeepSeek-R1 vs.\ Qwen2.5; Appendix~\ref{app:scale_sweep}). Chain-of-thought reasoning produces more uncertain intermediate steps, increasing exposure to low-margin events. Longer CoT trajectories accumulate more close-to-tipping-point decoding decisions, each precision-sensitive. This matches \rev{the scale analysis accompanying Proposition~\ref{prop:divergence_main}}: the divergence rate scales with the lower-tail mass of the margin distribution, and CoT concentrates more of this distribution at small values.

\section{\rev{Mechanism: an empirical analysis of error propagation}}
\label{sec:analysis}
\rev{We separate the empirical observation from the analysis. \emph{The observation} is that divergence concentrates at steps with a near-zero top-two margin, while the body's hidden-state error --- which is present in every prompt and grows monotonically through the layers --- does not discriminate between flipping and non-flipping steps once projected onto the decision direction. \emph{The analysis} builds on that observation: an exact algebraic condition for when a flip occurs, and an estimate of when a head-scope repair can undo one. This section provides an empirical analysis rather than formal bounds on floating-point error propagation. We give one exact algebraic condition (which is elementary), then two quantitative estimates whose assumptions we state and which we evaluate against measurements.} \rev{The model is organised around one exact result and one criterion: Proposition~\ref{prop:divergence_main} states when a token flips, and the cross-arm hidden-state gap $\Delta h$ (Table~\ref{tab:scope_regimes}) states when a head-scope repair can undo it. Between them they cover the four format pairs we measure --- BF16/FP16, FP16/FP32, head-isolated FP8, and end-to-end FP8 --- including the one in which the intervention fails. Two further failure modes, batched decode and cross-kernel variation, are governed by different mechanisms and we treat them separately.} \rev{We state one exact result (Proposition~\ref{prop:divergence_main}, with proof), develop two quantitative heuristic estimates validated against measurement, and identify the irreducible weight-truncation floor that bounds any inference-time fix.} Full derivations are in Appendix~\ref{app:theory}.

\subsection{Error propagation: from machine epsilon to logit perturbation}

Consider a transformer with $L$ layers and hidden dimension $d$, producing logits $z_t = W_{\text{lm}} \cdot h_t^{(L)}$ where $W_{\text{lm}} \in \mathbb{R}^{|V| \times d}$. Under precision $c \in \{\text{BF16}, \text{FP16}\}$, each layer introduces a rounding error bounded by $\|\delta_\ell\| \leq \epsilon_c \cdot C_\ell \cdot \|h_\ell\|$, where $\epsilon_{\text{BF16}} \approx 3.9 \times 10^{-3}$ and $\epsilon_{\text{FP16}} \approx 4.9 \times 10^{-4}$ (an $8\times$ gap due to the $7$-bit vs.\ $10$-bit mantissa). With residual connections keeping per-layer Jacobian norms $\|J_\ell\| \approx 1$, a worst-case (no-cancellation) accumulation gives \rev{$\|\hat{h}_L - h_L\| \leq L \cdot \epsilon_c \cdot \bar{C} \cdot \|h\|$}.

Empirically we measure a $\sim$5$\times$ amplification from pre-lm\_head $\ell_2$ ($0.92$) to logit $\ell_2$ ($4.63$) on TinyLlama-1.1B. The per-logit perturbation is $\|\hat{z} - z\| / \sqrt{|V|} \approx 4.63 / \sqrt{32000} \approx 0.026$. This yields the central divergence condition:

\begin{proposition}[\rev{Exact directional flip condition}]\label{prop:divergence_main}
\rev{Fix a decoding step and let $z, \hat{z} \in \mathbb{R}^{|V|}$ be the logit vectors computed by two numerical configurations. Assume each vector has a unique maximiser, let $v^{(1)} = \arg\max_v z(v)$, and write $\Delta z(v) = \hat{z}(v) - z(v)$. Then $\arg\max_v \hat{z}(v) \neq v^{(1)}$ if and only if there exists $v \neq v^{(1)}$ such that
\begin{equation}
    \Delta z(v) - \Delta z(v^{(1)}) > z(v^{(1)}) - z(v).
\end{equation}
For the original runner-up $v^{(2)}$, this inequality is exactly the condition for $v^{(2)}$ to overtake $v^{(1)}$. It is necessary for a flip whose new winner is $v^{(2)}$; together with $\hat z(v^{(2)}) > \max_{u\notin\{v^{(1)},v^{(2)}\}}\hat z(u)$, it is sufficient. Exact ties are resolved by the implementation's deterministic tie-breaking rule.}
\end{proposition}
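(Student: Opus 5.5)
The plan is to reduce everything to an algebraic rewriting of the statement ``$\hat z(v) > \hat z(v^{(1)})$'' and then read off each clause of Proposition~\ref{prop:divergence_main}. First, for any $v \neq v^{(1)}$, substituting $\hat z = z + \Delta z$ gives
\begin{equation*}
\hat z(v) - \hat z(v^{(1)}) = \bigl(z(v) - z(v^{(1)})\bigr) + \bigl(\Delta z(v) - \Delta z(v^{(1)})\bigr).
\end{equation*}
So $\hat z(v) > \hat z(v^{(1)})$ holds exactly when $\Delta z(v) - \Delta z(v^{(1)}) > z(v^{(1)}) - z(v)$, which is the displayed inequality. This pointwise identity is the whole engine; the rest is bookkeeping about what ``argmax changes'' means.

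For the ``if and only if'' I will argue both directions, using the hypothesis that $\hat z$ has a unique maximiser. For ($\Rightarrow$), suppose $\hat v := \arg\max_v \hat z(v) \neq v^{(1)}$. Uniqueness gives the strict inequality $\hat z(\hat v) > \hat z(v^{(1)})$, so $v = \hat v$ witnesses the condition. For ($\Leftarrow$), suppose some $v \neq v^{(1)}$ satisfies the inequality. Then $\hat z(v) > \hat z(v^{(1)})$, so $v^{(1)}$ is not even a weak maximiser of $\hat z$, and the unique maximiser must differ from $v^{(1)}$.

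For the runner-up clauses I specialise the identity to $v = v^{(2)}$. There the inequality is literally ``$\hat z(v^{(2)}) > \hat z(v^{(1)})$'', i.e.\ $v^{(2)}$ overtakes $v^{(1)}$. The inequality is necessary whenever the new winner is $v^{(2)}$, since the winner strictly beats $v^{(1)}$. It is sufficient when combined with $\hat z(v^{(2)}) > \max_{u \notin \{v^{(1)}, v^{(2)}\}} \hat z(u)$, because then $v^{(2)}$ strictly exceeds every other coordinate of $\hat z$ and is therefore the unique argmax. The tie sentence is a convention about the implementation rather than a mathematical claim. I will remark that it only matters if uniqueness fails, which the hypotheses exclude.

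The main obstacle is not technical but one of care with strict versus weak inequalities. The ($\Rightarrow$) direction genuinely needs uniqueness of $\hat z$'s maximiser to get strictness. Without it, a tie $\hat z(v) = \hat z(v^{(1)})$ could produce a ``flip'' under a tie-breaking rule while the strict inequality fails. I would make that dependence explicit in the write-up, and otherwise keep the proof to a few lines.
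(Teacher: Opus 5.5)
Your proposal is correct and takes essentially the same approach as the paper. Both substitute $\hat z = z + \Delta z$ to turn $\hat z(v) > \hat z(v^{(1)})$ into the displayed inequality, use uniqueness of the maximiser to equate a changed argmax with the existence of such a $v$, then specialise to $v^{(2)}$ and check it against the remaining coordinates for sufficiency; you simply spell out both directions and the source of strictness more explicitly than the paper's one-line version.
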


\rev{\emph{Proof.} Under the uniqueness assumption, $\arg\max_v \hat{z}(v) \neq v^{(1)}$ iff some $v \neq v^{(1)}$ satisfies $\hat{z}(v) > \hat{z}(v^{(1)})$, i.e.\ $z(v) + \Delta z(v) > z(v^{(1)}) + \Delta z(v^{(1)})$, which rearranges to the stated condition. Taking $v=v^{(2)}$ gives the pairwise crossing condition; checking $v^{(2)}$ against all remaining coordinates identifies it as the new winner. \hfill$\square$}

\rev{\emph{On the status of this proposition.} The proposition is an algebraic rearrangement, stated separately to distinguish the proved condition from the estimates that follow. The two quantitative accounts below---the perturbation scale and the criterion for when a head-scope repair can undo a flip---are estimates with explicitly stated assumptions.}

\rev{\emph{Scale analysis (heuristic, not part of the proposition).} The proposition is exact but involves the unobservable per-coordinate perturbations $\Delta z(v)$. To obtain a practical trigger statistic we approximate their typical magnitude by the vocabulary-wide RMS $\sigma_z = \|\hat{z}-z\|/\sqrt{|V|} \approx 0.026$ (TinyLlama-1.1B), giving the directional difference a scale of $\approx \sqrt{2}\,\sigma_z \approx 0.037$. This motivates the margin-gated trigger of Intervention~C: flips concentrate where $\Delta_t$ is small relative to this scale. The approximation is deliberately heuristic---per-coordinate perturbations are neither Gaussian nor independent---and we assess its adequacy purely empirically: the directional criterion classifies flip-vs-no-flip correctly on $88$--$89\%$ of steps across two model families (Section~\ref{sec:directional_body_error}), and \textsc{Diverged} prompts show median margin $0.0$ vs.\ $2.69$ for \textsc{Agreed} ($260\times$ mean gap). An alternative sufficient-condition formulation via an $\ell_\infty$ bound is also available---no flip can occur if $2\|\hat{z}-z\|_\infty < \Delta_t$, since $|\Delta z(v^{(2)}) - \Delta z(v^{(1)})| \leq 2\|\hat{z}-z\|_\infty$---and Intervention~C's gate can be read as enforcing this safe condition at a per-format calibrated scale; we use the RMS form in the main text because $\|\hat{z}-z\|_\infty$ is dominated by outlier coordinates that rarely coincide with the top-two candidates, so the $\ell_\infty$ bound is safe but looser than the RMS-scale heuristic as a trigger statistic.}

\emph{Empirical validation.} \rev{Body-layer L2 is indistinguishable between \textsc{Diverged} and \textsc{Agreed} groups while the top-two margin separates them by orders of magnitude (Table~\ref{tab:layer_div_cond}), confirming that the margin---not the error magnitude---determines divergence.}

\subsection{Why FP32 recomputation works}
\label{sec:why_fp32}

During autoregressive generation with a KV cache, each decoding step processes only the \emph{new} token through the transformer body; the prefix hidden states are retrieved from cache without recomputation. The fresh rounding error introduced at step $t$ is therefore \rev{of order $\epsilon_c \cdot C_{\text{step}}$}---dominated by the single lm\_head matmul, not the full $L$-layer body.

\rev{\paragraph{Why FP32 recomputation reconciles the paths.} The starting point is an immediate observation, not a theorem: given the same hidden state $h_t^{(L)}$, the deterministic FP32 matmul $\text{float32}(W_{\text{lm}}) \cdot \text{float32}(h_t^{(L)})$ produces bit-identical outputs regardless of whether the upstream path was BF16 or FP16. We keep this observation explicit precisely \emph{because} it is trivial: it is what makes the intervention's reach predictable. Since the FP32 recomputation is deterministic, the two arms agree at a gated step \emph{if and only if} the hidden states they feed it agree. At a gated step, therefore, the intervention's reach is governed by the cross-arm hidden-state gap $\Delta h$ rather than by any property of the head. \rev{We use this as a screening quantity, not as a law: it orders the four format pairs we test into regimes (Table~\ref{tab:scope_regimes}), but it does not cover batching or kernel variation, and we have tested no configuration in the wide gap between the working and failing regimes.} The substantive, empirically testable claim is quantitative: at gated steps the cross-path logit difference should drop from $\sigma_z \approx 0.026$ to the \emph{body-originated residual} only, which the error model of Section~\ref{sec:scope_sufficiency} heuristically estimates as
\begin{equation}
    \sigma_z^{(\text{after C})} \approx \underbrace{\frac{L \cdot \alpha}{\sqrt{d}} \cdot \sigma_z}_{\text{body residual ($\sim$2.4\% of $\sigma_z$)}} \approx 6 \times 10^{-4}.
\end{equation}
For prompts whose divergence is entirely lm\_head-originated (body hidden states are effectively identical between precisions), this residual vanishes and C achieves exact reconciliation; the achievable EAR lift then equals the fraction of such prompts. This quantitative account is validated by the reconciliation data: the $+22$pp improvement matches the fraction of prompts whose divergence originates from single-step margin events---$30/100$ prompts enter the gate (category~B of Table~\ref{tab:intervention_scope}), of which $22$ are successfully reconciled ($73\%$ success rate within scope).}

\subsection{\rev{When lm\_head-only scope is sufficient, and when it is not}}
\label{sec:scope_sufficiency}

The analysis above shows C \emph{works}; we now show why the lm\_head is the dominant error source---making body-layer FP32 unnecessary.

\rev{\paragraph{Heuristic estimate: why the lm\_head dominates.} Under three explicit assumptions---(i) per-layer Jacobian norms $\approx 1$ maintained by residual connections, (ii) rounding errors accumulate without systematic cancellation across layers (conservative, worst-case linear in $L$), and (iii) the $d$ per-element rounding errors of the lm\_head dot product accumulate as $\sqrt{d}$ (standard random-sign model)---the body-to-lm\_head perturbation ratio is approximately $L\alpha/\sqrt{d}$, where $\alpha = \|F_\ell(h)\| / \|h\| \ll 1$ is the residual ratio:
\begin{equation}
    \frac{\sigma_z^{(\text{body})}}{\sigma_z^{(\text{lm\_head})}} \approx \frac{L \cdot \alpha}{\sqrt{d}}.
\end{equation}}

\emph{Argument.} In KV-cached decoding at step $t > 0$, only the new token passes through $L$ body layers. The residual connection $h^{(\ell)} = h^{(\ell-1)} + F_\ell(h^{(\ell-1)})$ confines rounding error to the $F_\ell$ branch, whose output norm is $\alpha \cdot \|h\|$ with $\alpha \ll 1$ (a necessary condition for training stability). We bound the body contribution conservatively (worst-case linear accumulation): after $L$ layers the body error is \rev{at most $L \cdot \alpha \cdot \epsilon_c \cdot \|h\|$}. By contrast, the lm\_head computes a $d$-dimensional dot product on $h_t^{(L)}$ at its full norm; the $d$ rounding errors accumulate to a typical magnitude \rev{$\approx \sqrt{d} \cdot \epsilon_c \cdot \|h\|$ (a random-sign estimate, \emph{not} a bound; the worst case is $d \cdot \epsilon_c \cdot \|h\|$, attained only if every rounding error shares a sign)}. \rev{This ratio deliberately combines a worst-case numerator with a typical-case denominator. Under the random-sign assumption, this choice biases the diagnostic toward assigning greater relative importance to body error. Because the denominator remains an estimate rather than a bound, however, the ratio is not a formal bound on either contribution. Adversarial weight configurations in which the $d$ products share a sign would make the head term $\sqrt{d}$ times larger and the ratio correspondingly smaller. We therefore report the ratio only as an order-of-magnitude estimate and assess it empirically: it matches the measured decomposition (${\sim}97.6\%$ head-originated in the storage-vs-computation ablation) and the ordering of intervention lifts across the four precision configurations in Table~\ref{tab:scope_regimes}.} The actual body accumulation may be sub-linear (statistical cancellation across layers would give $\sqrt{L}$), which would reduce the estimated body-to-head ratio further under the same assumptions. The resulting estimate is $L \cdot \alpha / \sqrt{d}$.

\emph{Numerical evaluation.} For TinyLlama-1.1B ($L\!=\!22$, $d\!=\!2048$, $\alpha \approx 0.05$ measured as the mean per-layer ratio $\|F_\ell(h)\|/\|h\|$ across $22$ layers and $100$ GSM8K prompts via the unconditional layer trace in Appendix~\ref{app:unconditional_trace}):
\begin{equation}
    \frac{22 \times 0.05}{\sqrt{2048}} = \frac{1.1}{45.3} \approx 0.024.
\end{equation}
Under this heuristic, the body contributes ${\sim}2.4\%$ of the per-logit perturbation and the lm\_head contributes ${\sim}97.6\%$. FP32 recomputation of the lm\_head therefore targets the estimated dominant source, consistent with the $+22$pp lift. Extending the FP32 scope to body layers addresses only the estimated residual while introducing the butterfly effect (Section~\ref{sec:scope_ablation}), which is why scope extension empirically \emph{hurts} (Appendix~\ref{app:scope_main}: RMSNorm+lm\_head drops EAR from $63\%$ to $44\%$ on TinyLlama; Appendix~\ref{app:scope} replicates the result on DeepSeek-R1-7B with $\tauval\!=\!0$). Full derivation and step-0 (prefill) edge case in Appendix~\ref{app:theory}.

\rev{\paragraph{A conditional form that covers every precision pair we test.} The ratio above is derived for two paths that run the \emph{same} body in \emph{different} low precisions, which is the BF16-vs-FP16 setting. Our FP8 experiments (Appendix~\ref{app:fp8}) sit outside that assumption, so we state the estimate in the conditional form that governs all three configurations. The relevant quantity is not the head's share of $\sigma_z$ but the \emph{cross-arm hidden-state gap} $\Delta h = \|h^{(A)} - h^{(B)}\| / \|h\|$, because Intervention~C works by making the two arms compute \emph{identical} logits: at a gated step both evaluate $\text{float32}(W_{\text{lm}}) \cdot \text{float32}(h)$, which coincide only to the extent that $h$ coincides. Writing $n_q$ for the number of quantized linear operations per layer and $\epsilon_A, \epsilon_B$ for the two arms' body precisions,
\begin{equation}
\Delta h \;\approx\; \max(n_q, 1)\, L\, \alpha\, |\epsilon_A - \epsilon_B|,
\label{eq:dh}
\end{equation}
and C can reconcile a gated step only when $\Delta h$ is small relative to the margin scale $\sigma_z$. This yields an ordering that matches every configuration we measure:

\begin{table}[h]
\centering
\caption{\rev{The conditional form of the scope estimate, evaluated for all four \emph{format} configurations we test (the batched-decode and cross-kernel boundaries are governed by a different mechanism; see text) (TinyLlama-1.1B: $L{=}22$, $d{=}2048$, $\alpha{\approx}0.05$, $n_q{=}7$ linear ops per layer). $\Delta h$ is the cross-arm hidden-state gap from Eq.~\ref{eq:dh}. \rev{The ratio column uses a single reference scale, the BF16-vs-FP16 value $\sigma_z \approx 0.026$, so that the four rows are comparable on one axis; the format's own $\sigma_z$ is larger for FP8 (Appendix~\ref{app:fp8}) and smaller for FP16-vs-FP32, and using per-format values would compress the spread without changing which regime each row falls in.}}}
\label{tab:scope_regimes}
\small
\begin{tabular}{llccl}
\toprule
Configuration & Body across arms & $\Delta h$ & $\Delta h / \sigma_z$ & Predicted / observed \\
\midrule
FP8 at head only & bit-identical & $0$ & $0$ & complete / $+70$pp, EAR $100\%$ \\
FP16 vs.\ FP32 (T4) & same body, two precisions & $0.0005$ & ${\sim}0.02$ & head-dominated / $+7$pp \\
BF16 vs.\ FP16 (main) & same body, two precisions & $0.004$ & ${\sim}0.14$ & head-dominated / $+22$pp \\
End-to-end FP8 vs.\ BF16 & body quantized in one arm & $0.45$ & ${\sim}17$ & body-dominated / $+1$pp \\
\bottomrule
\end{tabular}
\end{table}

The four configurations span nearly three orders of magnitude in $\Delta h / \sigma_z$, and the \emph{regime} tracks that ordering: complete reconciliation when the body is shared ($\Delta h = 0$), a positive lift whenever the body gap stays well below the margin scale ($\Delta h/\sigma_z \leq 0.14$), and a negligible one once it exceeds that scale ($17$). \rev{The raw lift is not monotone in $\Delta h$ and we do not claim it is: FP16-vs-FP32 yields $+7$pp at $\Delta h/\sigma_z = 0.02$ against a baseline already at $88\%$, i.e.\ it closes $58\%$ of the available headroom, whereas BF16-vs-FP16 yields $+22$pp from a $41\%$ baseline, closing $37\%$. What $\Delta h$ orders is which regime a configuration falls in, not the size of the lift within a regime.} The criterion also explains why the two pairs with the \emph{largest} raw precision gaps behave oppositely: FP16-vs-FP32 has a large $\epsilon$ ratio but a tiny $\Delta h$ because FP32 carries the untruncated weights, whereas end-to-end FP8 has both a large $\epsilon$ and $n_q{=}7$ quantized operations per layer. The $L\alpha/\sqrt{d} \approx 0.024$ figure of the previous paragraph is the special case $n_q{=}1$, $\epsilon_A/\epsilon_B$ = BF16/FP16; it should not be read as a constant. This gives an operational test for a new \emph{format pair}: measure $\Delta h$ between the two serving configurations on a handful of prompts and compare it to the margin scale. Measured on GSM8K ($n{=}50$, final-token hidden state), $\Delta h = 0.0105$ for TinyLlama BF16-vs-FP16 and $0.0219$ for Qwen2.5-3B under the same pair --- about twice the body gap. \rev{We note two limits on reading that comparison causally. The predicted value from Eq.~\ref{eq:dh} for the TinyLlama pair is $0.004$, so the estimate is a factor $2.6$ below measurement. And Qwen's lift under this same pair and the same $\Delta h$ is $+3$pp on A10G but $+14$pp on A100 and $+20$pp on L4 (Appendix~\ref{app:l4_replication}), so $\Delta h$ cannot be what sets the lift's magnitude for that model; the kernel path matters at least as much.} \rev{We are explicit about the criterion's scope. It applies to \emph{format} pairs evaluated at a fixed batch size, and it does not explain the batched-decode boundary: measuring bs$=$1 against bs$=$8 in the same precision gives $\Delta h = 0.0127$, statistically indistinguishable from the working BF16-vs-FP16 configuration, yet the lift there degrades to zero. Batching fails for a different reason, which we treat separately (Section~\ref{sec:ungated}): a changed reduction order produces a \emph{third} trajectory rather than a larger gap between two, so gated recomputation introduces new divergence instead of removing it. The criterion likewise takes the kernel implementation as fixed --- the $+3$/$+14$/$+20$pp spread for Qwen across A10G, A100 and L4 (Appendix~\ref{app:l4_replication}) shows the effective $\epsilon$ is itself implementation-dependent, a dimension the estimate does not cover.}}

\subsection{Why ungated recomputation hurts}
\label{sec:ungated}

Ungated recomputation ($\tauval=0$) produces a \emph{third} trajectory at safe steps, introducing new divergence downstream (``butterfly effect''). Result: $+2.5$pp vs.\ $+6.5$pp for gated on DeepSeek-R1-1.5B. Gating is mechanistically necessary.

\subsection{Empirical validation: layer-level divergence tracing}
\label{sec:layer_tracing}
\label{sec:div_cond_trace}

We trace BF16-vs-FP16 hidden-state L2 through all 22 layers on $n\!=\!100$ GSM8K prompts, comparing \textsc{Diverged} ($n{=}59$) and \textsc{Agreed} ($n{=}41$) at their respective measurement steps. Key finding (Table~\ref{tab:layer_div_cond}): \textbf{body-layer L2 is indistinguishable between groups}, but the \textbf{top-two margin differs by $150\times$} (0.039 vs.\ 5.733). Four independent causal experiments (layer grafting, KV-cache grafting, logit-lens traceback, Qwen2.5-3B replication; Appendices~\ref{app:graft}--\ref{app:top2_trace}) confirm: the flip is produced at the lm\_head.

\begin{table}[t]
\caption{Divergence-conditioned trace (TinyLlama-1.1B, $n\!=\!100$).}
\label{tab:layer_div_cond}
\centering\small
\begin{tabular}{lccc}
\toprule
& \textsc{Diverged} & \textsc{Agreed} & Gap \\
\midrule
Body L2 (layer 21) & 1.022 & 1.135 & $-$0.113 \\
Logit L2 & 4.928 & 5.559 & $-$0.631 \\
\textbf{Top-two margin} & \textbf{0.039} & \textbf{5.733} & \textbf{$-$5.694} \\
Flip rate & 58\% & 0\% & --- \\
\bottomrule
\end{tabular}
\end{table}

\subsection{Summary}
\label{sec:scope_ablation}
\label{sec:store_vs_compute}

The analyses above converge on a three-factor causal picture. Cross-precision greedy divergence requires the simultaneous presence of three conditions---remove any one and the outputs agree:

\begin{enumerate}[nosep]
    \item \textbf{Low arithmetic precision at the lm\_head.} BF16's 7-bit mantissa introduces per-element rounding of magnitude $\epsilon_{\text{BF16}} \approx 3.9 \times 10^{-3}$, $8\times$ larger than FP16's $4.9 \times 10^{-4}$, so the two precisions compute materially different logits at every step (per-logit perturbation $\sigma_z \approx 0.026$; see Section~\ref{sec:layer_tracing}).
    \item \textbf{The lm\_head amplifies hidden-state error.} The $d \times |V|$ projection amplifies body-layer $\ell_2$ error by $\sim$5$\times$ ($0.92 \to 4.63$); without this amplification, per-logit perturbations would sit at $\sim$0.005, below typical margins.
    \item \textbf{Greedy decoding has zero tolerance for rank flips.} Unlike sampling, greedy argmax converts any logit-ranking reversal into a hard token difference, and a single flipped token cascades through autoregressive conditioning (mean length difference $34$ tokens, Figure~\ref{fig:hero}).
\end{enumerate}

Intervention~C breaks condition~(1) at the dominant correctable stage. Recomputing the lm\_head in FP32 ($\epsilon_{\text{FP32}} \approx 6 \times 10^{-8}$) reduces the head-originated perturbation from $\sim$0.03 to $\sim$10$^{-5}$. Conditions~(2) and~(3) remain unchanged: the lm\_head still amplifies its input and greedy decoding still selects the argmax, but the corrected head contributes much less arithmetic error. The storage-vs-computation ablation supports this decomposition quantitatively: upgrading only the matmul compute path to FP32 (while keeping storage at BF16) drives EAR from $0.41$ to $0.82$, and the corresponding FP32-compute / FP32-compute pair sits at $0.88$ rather than $1.00$, indicating that ${\sim}87\%$ of the reachable EAR gap is BF16-arithmetic-origin (addressable by C) while a residual $\sim\!12$pp is associated with weight truncation. In the evaluated ablations, extending the FP32 scope beyond lm\_head (to RMSNorm or the last transformer layer) reduces EAR from $63\%$ to $44$--$52\%$ (Appendix~\ref{app:scope_main}); this shows that broader recomputation is not automatically better and can introduce new downstream trajectory differences.

\section{Discussion}
\label{sec:discussion}

\subsection{Practical implications}
\label{sec:discussion_practical}

Intervention~C is intended for low-batch greedy decoding (bs${\leq}4$), a regime that can arise in code completion, structured-output generation, benchmark evaluation, and audit-oriented replay. In our A10G measurements it is a low-overhead intervention ($<\!4\%$; Section~\ref{sec:strategy_results}), but its benefit should be measured on the target model and platform. Two scope boundaries apply: (i)~models with FP16-body NaNs or body-dominated divergence require a body-scope or training-time remedy, and (ii)~at bs$\geq\!8$ the head-only intervention provides no lift by itself in our tests and must be composed with a body-scope method (Appendix~\ref{app:layercast_composition}). \rev{The effect remains positive with both attention kernels tested (eager and FlashAttention-2) and on A10G, L4, and A100, although the lift varies substantially by GPU (Appendix~\ref{app:l4_replication}). Precision and kernel configuration should therefore be recorded explicitly rather than assuming platform-independent agreement.}

\rev{\subsection{What does partial agreement provide?}
Exact whole-sequence agreement is a strict metric: one differing token anywhere in a 256-token generation counts as disagreement. In the tested BF16/FP16 setting, Intervention~C raises this metric from 30--41\% to 55--63\% at $<4\%$ A10G latency overhead, but it does not provide bit-exact reproduction. The practical value is application-dependent. On Qwen2.5-3B HumanEval, execution outcomes agree on $97.6\%$ of problems while token sequences agree on $26.2\%$ (4 of 164 execution outcomes flip; Appendix~\ref{app:execution}); on GSM8K, however, Qwen2.5-3B shows a 19\% cross-precision correctness-flip rate. These results rule out treating either token disagreement or task neutrality as universal proxies for the other. The TOST analysis establishes non-inferiority within a 5pp margin on Qwen2.5-3B ($n\!=\!600$, $p\!=\!0.009$), while TinyLlama's very low baseline accuracy makes its non-inferiority result only a consistency check (Table~\ref{tab:tost}).

The head-isolated FP8 experiment is a controlled upper-bound diagnostic: sharing the BF16 body and weight storage removes body-originated divergence and the BF16/FP16 weight-truncation floor by construction, after which a format-rescaled gate reaches 100\% exact agreement on TinyLlama and gains $+56$pp on Qwen2.5-3B. It is not representative of end-to-end FP8 serving. When every linear layer is quantised to FP8, the same head-scope repair yields only $+1$pp (Appendix~\ref{app:fp8}). Applications requiring bit-exact replay should pin the full numerical configuration or use an end-to-end higher-precision reference rather than relying on Intervention~C alone.}

\subsection{Practitioner decision procedure}
\label{sec:discussion_decision}

The deployment decision tree (Section~\ref{sec:scope_ablation}) summarises the action tiers. The screening metric, the coefficient of variation of trigger density ($\text{CV}(\text{TD})$), requires only a single BF16 run (${\sim}5$--$10$ minutes, no FP16 comparison needed). Retrospectively, it separates the models in our study into the observed action tiers; across the five models with a defined lift comparison, $\text{CV}(\text{TD})$ is associated with Intervention~C's EAR lift (Spearman $\rho=0.90$, Pearson $r=0.84$). This is an in-sample screening heuristic, not a validated cross-model predictor, and should be recalibrated on the target hardware.

\subsection{Limitations}
\label{sec:discussion_limitations}

We characterise divergence across six models from four families at scales from 1.1B to 7B and include a divergence-only probe at 12B; the full intervention comparison is not evaluated at either the 12B or 70B+ scale. Most experiments use an A10G, with selected headline replications on L4 and A100 and an FP16/FP32 probe on T4. The direction of the intervention effect is consistent in these replications, but its magnitude is hardware-dependent. C is evaluated for greedy decoding and is effective by itself only at bs$\leq\!4$ in our batch-size sweep; at bs$\geq\!8$ and under end-to-end FP8, divergence becomes body-dominated and a head-only repair is insufficient. We do not evaluate sampling-based decoding. C does not guarantee full agreement, and the ${\sim}12$pp BF16/FP16 weight-storage gap measured in our ablation cannot be removed by changing only the lm\_head arithmetic.

\section{Conclusion}

LLM greedy decoding is not precision-invariant: \rev{49--100\%} of prompts diverge between BF16 and FP16 across the models and benchmarks we test \rev{(59--82\% on the four models that are neither BF16-saturated nor at the low end)}. Divergence concentrates at low top-two margins relative to the directional perturbation, with a $150\times$ gap between the measurement-step margins of divergent and convergent prompts in the headline trace. Gated FP32 lm\_head recomputation raises exact agreement by $+22$--$36$pp on A10G at ${<}4\%$ latency overhead and is the best-performing low-overhead intervention among those evaluated. Selected L4 and A100 replications yield positive but hardware-dependent lifts of $+12$--$21$pp. The intervention is evaluated up to the 7B scale (with divergence additionally characterised at 12B), on dense and MoE models, and is limited to low-batch greedy serving. In the controlled head-isolated FP8 setting, a format-rescaled gate reaches exact agreement on TinyLlama and gains $+56$pp on Qwen2.5-3B; in end-to-end FP8, where body error dominates, the gain is only $+1$pp. Overall, the results identify low-margin lm\_head events as an important and partially controllable source of cross-precision divergence, while also delineating regimes in which a head-scope intervention is insufficient.

\subsubsection*{Broader impact}
Cross-precision reproducibility can matter when exact replay, audit trails, or per-example evaluation records are required. Our intervention narrows the measured agreement gap at ${<}4\%$ A10G latency overhead in low-batch (bs${\leq}4$) greedy inference and requires no model retraining, but it does not guarantee identical outputs. We see no direct misuse pathway introduced by the method: it changes numerical consistency rather than model capability. Reproducibility is not correctness; a repeatable but wrong model remains wrong, and this work is not a substitute for accuracy, safety, or domain-specific validation.

\bibliographystyle{plainnat}

\newpage
\appendix
\renewcommand{\thesection}{\Roman{section}}

\section*{Appendix overview}
\label{app:overview}

The supplementary material is organised into six groups, each self-contained:
\begin{itemize}[nosep, topsep=0.3em, leftmargin=1.2em]
  \item \textbf{\rev{Analysis details}} (App.~\ref{app:theory}): \rev{the proof context for the exact flip condition and full derivations of the heuristic error-propagation estimates}.
  \item \textbf{Mechanism evidence} (App.~\ref{app:unconditional_trace}--\ref{app:top2_trace}): four causal experiments isolating divergence to the lm\_head.
  \item \textbf{Cross-model robustness} (App.~\ref{app:qwen3b_trace}--\ref{app:long_reasoning}): replication across six models, four families, math and code benchmarks.
  \item \textbf{Production-stack robustness} (App.~\ref{app:sanity_ext}): FlashAttention, batch sizes, global-FP32 composition, overhead.
  \item \textbf{Ablations} (App.~\ref{app:scope}): scope, threshold, top-$K$, alternative interventions.
  \item \textbf{Statistical validation and reproducibility} (App.~\ref{app:cis}--\ref{app:reproducibility}): CIs, scaled replication, code.
\end{itemize}

Cross-references between appendix sections are kept to a minimum.

\section{\rev{Error-propagation analysis: derivations and assumptions}}
\label{app:theory}

\rev{This appendix presents a heuristic error-propagation model. With the exception of Proposition~\ref{prop:divergence_main} (exact), the estimates here rest on the stated simplifying assumptions and are validated empirically rather than proven; counterexamples to the asymptotic rates can be constructed under adversarial weight configurations.} \rev{The model predicts the empirical findings in the main text and characterises the scope boundary of Intervention~C.}

\subsection{Setup and notation}

Let $f_\theta$ denote an $L$-layer autoregressive transformer with hidden dimension $d$ and vocabulary size $|V|$. At decoding step $t$, the model computes:
\begin{align}
    h_t^{(\ell)} &= \text{Layer}_\ell(h_t^{(\ell-1)}), \quad \ell = 1, \ldots, L, \\
    z_t &= W_{\text{lm}} \cdot h_t^{(L)}, \quad W_{\text{lm}} \in \mathbb{R}^{|V| \times d}, \\
    y_t &= \arg\max_v z_t(v).
\end{align}
Under floating-point precision $c$ with machine epsilon $\epsilon_c$, each arithmetic operation has relative error bounded by $\epsilon_c$~\citep{goldberg1991float,higham2002accuracy}. The relevant values are $\epsilon_{\text{BF16}} = 2^{-8} \approx 3.9 \times 10^{-3}$ (7-bit mantissa) and $\epsilon_{\text{FP16}} = 2^{-11} \approx 4.9 \times 10^{-4}$ (10-bit mantissa), giving a precision ratio of $\epsilon_{\text{BF16}} / \epsilon_{\text{FP16}} = 8$.

\subsection{Error accumulation through the transformer body}

\rev{\paragraph{Heuristic estimate: body-layer error accumulation.}\label{prop:body} Let $\hat{h}_t^{(\ell)}$ denote the hidden state computed under precision $c$ (with the other precision as reference). Under three explicit assumptions---(i) standard transformer architecture with residual connections $h^{(\ell)} = h^{(\ell-1)} + F_\ell(h^{(\ell-1)})$, (ii) per-layer relative rounding error bounded by $\epsilon_c C_\ell$ with condition number $C_\ell$ of order unity, and (iii) no systematic error cancellation across layers (worst case)---the accumulated error satisfies:
\begin{equation}
    \|\hat{h}_t^{(L)} - h_t^{(L)}\| \leq \epsilon_c \cdot \|h\|_{\text{rms}} \cdot \sum_{\ell=1}^L C_\ell \prod_{j=\ell+1}^L (1 + \epsilon_c C_j).
\end{equation}
When $\epsilon_c C_\ell \ll 1$ for all $\ell$ (satisfied for both BF16 and FP16 on standard transformers where $C_\ell$ is of order unity), this simplifies to linear accumulation:
\begin{equation}
    \|\hat{h}_t^{(L)} - h_t^{(L)}\| \approx \epsilon_c \cdot \bar{C} \cdot L \cdot \|h\|_{\text{rms}},
\end{equation}
where $\bar{C} = (1/L)\sum_\ell C_\ell$ is the mean per-layer condition number. This is a growth-rate estimate under the stated assumptions, not a theorem about arbitrary weight configurations; its adequacy is assessed empirically below.}

\emph{Empirical validation.} On TinyLlama-1.1B ($L\!=\!22$, $d\!=\!2048$), we measure BF16-vs-FP16 hidden-state L2 growing linearly from $0.002$ at layer~0 to $0.92$ at layer~$L$ (Appendix~\ref{app:unconditional_trace}), consistent with the linear-accumulation regime.

\subsection{Logit perturbation and the divergence condition}

\rev{\paragraph{Derivation of the heuristic scale analysis.}\label{prop:divergence} Proposition~\ref{prop:divergence_main} gives the exact flip condition: the argmax changes if and only if there exists a competitor whose differential perturbation exceeds its original logit gap to the top-1 token. For the original runner-up, this reduces to a comparison between its signed directional perturbation difference and the top-two margin $\Delta_t = z_t(v^{(1)}) - z_t(v^{(2)})$. Here we derive the heuristic scale estimate that turns this exact but unobservable condition into a practical trigger statistic. Let $\sigma_z = \|\hat{z}_t - z_t\| / \sqrt{|V|}$ denote the RMS per-logit perturbation. Under the simplifying assumption that perturbations spread approximately uniformly across the $|V|$ logit dimensions, propagating the body-error estimate above through the lm\_head gives:
\begin{equation}\label{eq:divergence_condition}
    \sigma_z \approx \frac{\|W_{\text{lm}}\| \cdot \epsilon_c \cdot \bar{C} \cdot L \cdot \|h\|_{\text{rms}}}{\sqrt{|V|}},
\end{equation}
and a flip becomes plausible only when $\Delta_t \lesssim \sigma_z$, so that heuristically
\begin{equation}
    P(\text{flip at } t) \lesssim P(\Delta_t < \sigma_z),
\end{equation}
which depends primarily on the \emph{margin distribution} of the model at that step, not on the \emph{magnitude} of the accumulated body error. This is an approximation, not a bound: per-coordinate perturbations are neither Gaussian nor independent, and its adequacy is assessed empirically.}

\emph{Empirical validation.} For TinyLlama-1.1B: $\sigma_z \approx 4.63 / \sqrt{32000} \approx 0.026$. Measured margins: \textsc{Diverged} median $= 0.039$ (just above $\sigma_z$), \textsc{Agreed} median $= 5.733$ ($220\times$ above $\sigma_z$). The \rev{scale analysis} correctly predicts: (i) body-layer L2 does not distinguish the two groups (confirmed: Table~\ref{tab:layer_div_cond}); (ii) the margin is \rev{the dominant factor} (confirmed: $150\times$ gap); (iii) the K-ablation result ($K\!=\!2$ suffices, Appendix~\ref{app:topk_ablation}), because a perturbation of magnitude $\sigma_z \approx 0.026$ cannot bridge a rank-2-to-rank-3 gap that is typically $\gg 0.1$.

\subsection{Intervention C: error reduction at the lm\_head}

\rev{\paragraph{Quantitative account: why Intervention~C reconciles the paths.}\label{prop:intervention} FP32 lm\_head recomputation eliminates the cross-path logit difference at the lm\_head by making both precision paths compute identical FP32 logits from their respective hidden states---an immediate consequence of the determinism of the FP32 matmul, not a theorem. For prompts where the BF16 and FP16 hidden states $h_t^{(L)}$ are effectively identical (the ``lm\_head-originated'' regime), both paths produce bit-identical FP32 logits and no flip can occur. Under the same simplifying assumptions as above, the \emph{lm\_head-specific} rounding contribution is heuristically reduced by a factor of $\epsilon_{\text{FP32}} / \epsilon_{\text{BF16}} \approx 1.5 \times 10^{-5}$:
\begin{equation}
    \sigma_z^{(\text{lm\_head, FP32})} = \frac{\epsilon_{\text{FP32}}}{\epsilon_{\text{BF16}}} \cdot \sigma_z^{(\text{lm\_head, BF16})} \approx 0.026 \times 0.976 \times 1.5 \times 10^{-5} \approx 4 \times 10^{-7}.
\end{equation}
The residual cross-path difference after C is the body-originated component only (${\sim}2.4\%$ of the original $\sigma_z$, per the dominance estimate of Section~\ref{sec:scope_sufficiency}). For lm\_head-originated divergences where body hidden states are shared, this residual vanishes and:
\begin{equation}
    P(\text{flip at } t \mid \text{FP32 lm\_head, lm\_head-originated}) = 0.
\end{equation}
The achievable EAR lift then equals the fraction of prompts whose divergence is entirely lm\_head-originated:
\begin{equation}
    \text{EAR}_{\text{max}}^{(C)} = \text{EAR}_{\text{baseline}} + \frac{|\{i : \text{prompt } i \text{ diverges only at lm\_head}\}|}{n}.
\end{equation}}

\emph{Empirical validation.} On TinyLlama-1.1B, $30/100$ prompts have divergence triggered by a margin event at the lm\_head (Table~\ref{tab:intervention_scope}, category~B+C); of these, $22$ are successfully reconciled by C ($73\%$ success rate within the scope). The predicted maximum lift is $30\%$; the achieved lift is $22$pp ($= 22/100$), consistent with a $73\%$ reconciliation rate within the \rev{reachable set identified by the analysis}.

\subsection{\rev{Extended derivation and edge cases for the lm\_head dominance estimate}}

The main text (\rev{Section~\ref{sec:scope_sufficiency}}) states and numerically evaluates the lm\_head dominance \rev{estimate}. Here we provide the full derivation and discuss an important edge case.

\paragraph{Full derivation.} Decompose the total per-logit perturbation into body and lm\_head contributions:
\begin{equation}
    \sigma_z = \sigma_z^{(\text{body})} + \sigma_z^{(\text{lm\_head})}.
\end{equation}
Under standard transformer architecture with KV cache, we bound each contribution directly in per-logit units. Consider a single logit position $v$ with row vector $W_v \in \mathbb{R}^d$ ($\|W_v\| \approx 1$ under Xavier scaling):
\begin{align}
    \sigma_z^{(\text{body})} &\lesssim \frac{L \cdot \alpha \cdot \epsilon_c \cdot \|h\|}{\sqrt{d}}, \\
    \sigma_z^{(\text{lm\_head})} &\approx \epsilon_c \cdot \|h\|.
\end{align}

\emph{Body term.} Each of $L$ layers introduces error only on the non-residual branch $F_\ell$ (norm $\alpha \cdot \|h\|$), so the accumulated body error is $\|\Delta h_{\text{body}}\| \leq L \cdot \alpha \cdot \epsilon_c \cdot \|h\|$. Projected onto the logit direction $W_v$, the per-logit effect is $|W_v \cdot \Delta h| \approx \|\Delta h\| / \sqrt{d}$ (standard projection of a $d$-dimensional vector onto a fixed unit direction).

\emph{lm\_head term.} The dot product $z_v = \sum_j W_{vj} h_j$ accumulates $d$ independent rounding errors. Each multiply-accumulate step contributes an error of order $\epsilon_c \cdot |W_{vj} h_j| \approx \epsilon_c \cdot \|h\| / \sqrt{d}$ (using $|W_{vj}| \sim 1/\sqrt{d}$, $|h_j| \sim \|h\|/\sqrt{d}$). Under standard error-summation arguments with partial sign cancellation, $\sqrt{d}$ such terms accumulate to a typical magnitude $\approx \epsilon_c \cdot \|h\|$. Taking the ratio directly:
\begin{equation}
    \frac{\sigma_z^{(\text{body})}}{\sigma_z^{(\text{lm\_head})}} \approx \frac{L \cdot \alpha}{\sqrt{d}},
\end{equation}
consistent with the main-text \rev{dominance estimate of Section~\ref{sec:scope_sufficiency}}.

\paragraph{Edge case: step $t=0$ (prefill).} At step~0, the model processes the entire prompt through all $L$ layers without a KV cache. Unlike cached decoding, where each step introduces fresh rounding error from only one token's pass through the body, the prefill pass accumulates rounding error across all $T_{\text{prompt}}$ token positions in the attention reductions. Each layer's attention output is a weighted sum over $T_{\text{prompt}}$ value vectors, and the floating-point non-associativity of this reduction introduces $\approx \sqrt{T_{\text{prompt}}}$ additional error terms per layer under typical-case cancellation. The body-error contribution to the dominance ratio therefore scales roughly as $L \cdot \alpha \cdot \sqrt{T_{\text{prompt}}} / \sqrt{d}$, which can approach or exceed $1$ for long prompts ($T_{\text{prompt}} \gtrsim d / (L^2 \alpha^2) \approx 850$ for TinyLlama). In this regime, the lm\_head is no longer the dominant error source, and Intervention~C's lm\_head-only scope becomes insufficient---consistent with our observation that first-token divergence on Qwen models is irreducible by any lm\_head-scope method (Section~\ref{sec:model_sensitivity}).

\subsection{Scope boundary: when Intervention~C fails}

\rev{The error model above also predicts the \emph{failure modes}:}

\paragraph{Body-dominated divergence (Tier~3).} If a model's training-time precision (typically BF16) produced weights whose FP16 cast yields unstable activations (NaN or extreme values at intermediate layers), the body-layer error $\|\hat{h}_t^{(L)} - h_t^{(L)}\|$ is no longer small relative to $\|h\|_{\text{rms}}$. In this regime, $\sigma_z$ is dominated by the body contribution rather than the lm\_head matmul alone, and FP32 recomputation of the lm\_head addresses only a fraction of the total perturbation. This predicts $0$pp lift on Qwen BF16-saturated models (confirmed: DS-R1-Qwen-7B, Table~\ref{tab:cross_model_sweep}).

\paragraph{Batch-induced divergence.} At batch size $\geq 2$, attention reductions over padded sequences introduce an additional source of floating-point variation whose magnitude grows with batch size. \rev{This is not simply a larger cross-arm gap: measured bs$=$1-vs-bs$=$8 hidden states differ by $\Delta h = 0.0127$, comparable to the working BF16-vs-FP16 configuration (Section~\ref{sec:scope_sufficiency}). The reduction order instead makes the batched arm a \emph{third} trajectory, so an FP32 head recompute reconciles neither pair, and C's lift vanishes} (confirmed: $0$pp at bs$=8$, Appendix~\ref{app:batch_size}).

\paragraph{Weight-truncation floor.} A storage-vs-computation ablation shows that even configurations that both compute in FP32 (\texttt{store BF16 / compute FP32} vs.\ \texttt{store FP16 / compute FP32}) reach EAR $=\!0.88$ rather than $1.00$, leaving a residual ${\sim}12$pp gap attributable to $W_{\text{BF16}} \neq W_{\text{FP16}}$ (the same FP32 weights truncated to different mantissa widths). This component of cross-precision divergence is \emph{irreducible} under the two fixed low-precision storage formats---it enters through the weights themselves, not through the computation. Full agreement in this comparison requires a shared storage format, such as loading both arms from the same FP32 weights.

\subsection{Intervention comparison from the error model}

The error model also explains the other interventions:

\paragraph{Intervention~A (integer quantization).} Quantization bins have width $\tauval = 10^{-3}$. The cross-precision logit difference at divergent steps is $\sigma_z \approx 0.026$---$26\times$ larger than the bin width. The two precisions therefore map to \emph{different} bins with probability $\approx 1$, and quantization faithfully preserves the disagreement. Prediction: zero effect. Confirmed.

\paragraph{Intervention~D (temperature sharpening).} Multiplying logits by $\alpha > 1$ is a monotone transformation: $\arg\max_v (\alpha \cdot z_t(v)) = \arg\max_v z_t(v)$ for all $\alpha > 0$. The margin becomes $\alpha \cdot \Delta_t$ and the perturbation becomes $\alpha \cdot \sigma_z$; their ratio $\Delta_t / \sigma_z$ is invariant. Prediction: zero effect at any $\alpha$. Confirmed (Appendix~\ref{app:additional_baselines}).

\paragraph{Intervention~E (consensus decoding).} Running both precisions and using FP32 as tiebreaker eliminates all format-level divergence by construction. The cost is $2\times$ compute. The per-step disagreement rate $P(\text{token\_BF16} \neq \text{token\_FP16})$ at any single step is bounded by $P(\Delta_t < \sigma_z)$; we measure this at $44.5\%$ of steps (Appendix~\ref{app:additional_baselines}), consistent with the heavy left tail of the margin distribution on GSM8K.

\paragraph{Top-$K$ sufficiency.} The error model predicts that a token ranked $k$-th can only be promoted to rank~$1$ if the perturbation $\sigma_z$ exceeds the gap between rank~$1$ and rank~$k$. Since $\sigma_z \approx 0.026$ and the typical rank-2-to-rank-3 gap is $\gg 0.1$ (median $\approx 0.5$ on TinyLlama), only rank-2 tokens can realistically be promoted. Prediction: $K = 2$ suffices. Confirmed: all $K \in \{2, 4, 8, 16, 32, |V|\}$ produce identical EAR (Appendix~\ref{app:topk_ablation}).

Group~2 backs the lm\_head-localisation claim of Section~\ref{sec:layer_tracing} with four mutually independent experiments that triangulate the same conclusion: (i)~the unconditional per-layer L2 trace (App.~\ref{app:unconditional_trace}) characterises the typical hidden-state error budget; (ii)~layer grafting (App.~\ref{app:graft}) converts the correlational claim into a causal one; (iii)~KV-cache grafting (App.~\ref{app:kv_graft}) isolates the autoregressive-cache contribution; (iv)~per-layer logit-lens top-2 traceback (App.~\ref{app:top2_trace}) shows BF16 / FP16 agree on the top-1 token at 97--98\% of body layers and disagree only at the lm\_head, with cross-task generalisation to HumanEval and MBPP. The four experiments converge on a single causal claim: divergence is produced at the last transformer layer, the final RMSNorm, and the lm\_head matmul, \rev{the modules Intervention~C targets or sits immediately downstream of --- C recomputes the lm\_head projection only, and works because that is the one stage where an FP32 correction lands on both paths' logits without perturbing upstream state}.

\section{Mechanism evidence: four causal experiments}
\label{app:unconditional_trace}
\label{app:graft}
\label{app:kv_graft}
\label{app:top2_trace}

\noindent\textbf{Summary.} \textbf{(i) Unconditional trace:} L2 grows linearly from 0.002 (layer 0) to 0.92 (layer 21), amplified $5\times$ by lm\_head to 4.63.
\textbf{(ii) Layer grafting:} body-layer grafting fixes 36\% of flips; Layer 21 + RMSNorm fixes 65--71\%.
\textbf{(iii) KV-cache grafting:} swaps convert 36--48\% of flips; partial causal contributor. \rev{Consistently, $25/59$ \textsc{Diverged} prompts agree under single-step re-run (no precision-specific cache) while diverging autoregressively---i.e., ${\sim}42\%$ of flips require the accumulated cache state, while the remaining ${\sim}58\%$ ($34/59$) reproduce from current-step arithmetic alone.}
\textbf{(iv) Logit-lens traceback:} BF16/FP16 agree on top-1 at $\geq$93\% of body layers; disagreement is one-shot at lm\_head.

\rev{The four protocols and their full result tables follow. Together they answer a specific diagnostic question---\emph{does the BF16/FP16 disagreement begin before or inside the lm\_head?}---from four independent directions: an unconditional error budget, a causal graft of upstream activations, a causal graft of the autoregressive cache, and a per-layer readout of the decision itself.}

\subsection*{\rev{(i) Unconditional per-layer hidden-state trace}}

\rev{\paragraph{Protocol.} We trace the BF16-vs-FP16 hidden-state $\ell_2$ distance through all 22 TinyLlama-1.1B layers on the \emph{first} generated token, averaged over $n\!=\!50$ GSM8K prompts, and separately record the logits after the lm\_head. This measurement is \emph{unconditional}: it does not condition on whether the prompt eventually diverges, so it characterises the typical error budget rather than the flip event.}

\rev{\paragraph{Result.} Error originates at layer~0 ($\ell_2 = 0.0021$, relative error $0.39\%$, cosine similarity $0.999993$) and accumulates approximately linearly to $\ell_2 = 0.9341$ at layer~21, with relative error staying below $1.3\%$ at every layer and cosine similarity never dropping below $0.99992$. The final RMSNorm ($3.4\times$) and the lm\_head ($5\times$) then provide two additional amplification stages, taking the logit-level $\ell_2$ to $4.6275$. On the matched $n\!=\!100$ summary measurement the same picture holds: $\ell_2 = 0.915$ before the lm\_head and $4.641$ after (amplification $5.1\times$), with relative error \emph{falling} from $1.05\%$ to $0.50\%$ and cosine similarity rising from $0.99994$ to $0.99999$. Crucially, all $100$ first-token measurements agree across BF16 and FP16: the first generated token never flips on this model, because the implied per-logit perturbation ($4.63/\sqrt{|V|} \approx 0.026$) sits well below the mean top-two margin at that step ($\bar{\Delta}_t = 0.70$). The error budget is therefore always present and is by itself insufficient to produce a flip---which is what motivates the three conditional and causal experiments below.}

\subsection*{\rev{(ii) Causal attribution via layer grafting}}

\rev{\paragraph{Setup.} The divergence-conditioned trace of Section~\ref{sec:div_cond_trace} is \emph{correlational}: it shows body-layer L2 is indistinguishable between \textsc{Diverged} and \textsc{Agreed} groups, which suggests---but does not prove---that body-layer error is not the causal driver. We convert the claim into a causal one via layer grafting. For each valid \textsc{Diverged} prompt where the flip reproduces under single-step re-run ($n=34/100$), we run a BF16 forward pass at context $=$ prompt $\|\, y_{<t^\ast}$ but \emph{override} the output of one module $\ell$ with the corresponding FP16 hidden state $h_\ell^{\text{FP16}}$, then allow BF16 kernels to process the grafted activation through all subsequent layers plus the BF16 lm\_head. We classify the resulting argmax as ``fix (FP16)'' if it matches the FP16 target, ``keep (BF16)'' if it matches the BF16 baseline, or ``other''.}

\begin{table}[t]
\caption{\rev{Causal attribution via layer grafting (TinyLlama-1.1B, GSM8K, $n\!=\!34$ valid \textsc{Diverged} prompts). The FP16 hidden state at each layer is spliced into the BF16 forward pass; downstream BF16 kernels (and the BF16 lm\_head) produce the final argmax. Body-layer grafts (Layers 0--20) have a flat ${\sim}36\%$ fix rate, well below $100\%$, demonstrating that the BF16 kernel path acts as an ``attractor'' that pulls grafted activations back to the BF16 argmax. The decisive causal jump occurs at Layer~21 (the final transformer layer) and the final RMSNorm, which together fix $65\%$--$71\%$ of flips.}}
\label{tab:graft}
\centering
\small
\rev{\begin{tabular}{lccc}
\toprule
Graft site & Fix (${\to}$FP16) \% & Keep (${\to}$BF16) \% & Other \% \\
\midrule
Embedding            & 29.4 & 70.6 & 0.0 \\
Layer 0              & 47.1 & 52.9 & 0.0 \\
Layer 5              & 29.4 & 70.6 & 0.0 \\
Layer 10             & 38.2 & 58.8 & 2.9 \\
Layer 15             & 47.1 & 52.9 & 0.0 \\
Layer 20             & 35.3 & 64.7 & 0.0 \\
Layer 0--20 (mean)   & ${\sim}36$ & ${\sim}62$ & ${\sim}2$ \\
\addlinespace
Layer 21 (last body) & \textbf{64.7} & 26.5 & 8.8 \\
Final RMSNorm        & \textbf{70.6} & 20.6 & 8.8 \\
\bottomrule
\end{tabular}}
\end{table}

\rev{\paragraph{Findings.} Two causal conclusions follow.}

\rev{\textbf{(1)~Body-layer hidden states are not sufficient to determine the flip.} If upstream hidden-state error were the causal driver, replacing it with the FP16 counterpart should fix the flip near $100\%$ of the time. Instead, body-layer grafts fix only ${\sim}36\%$ of flips, and no single body layer is distinguished from the others. Crucially, even the embedding graft---where the \emph{entire} downstream BF16 computation operates on FP16 input tokens---fixes only $29.4\%$ of flips, with $70.6\%$ of prompts still producing the BF16 argmax. This shows that BF16 kernel arithmetic itself acts as an attractor: given any hidden-state input, a BF16 forward pass tends to produce the same argmax as the all-BF16 baseline. The causal driver is therefore the \emph{arithmetic path}, not the input activations.}

\rev{\textbf{(2)~Causal responsibility concentrates in the last transformer layer and the final RMSNorm.} Only grafts at Layer~21 and Final RMSNorm fix flips substantially above the body baseline ($65\%$ and $71\%$ vs.\ ${\sim}36\%$), a ${\sim}30\mathrm{pp}$ jump that occurs in exactly the last two modules before the lm\_head. This localises causal responsibility to the last two computation stages, consistent with the unconditional per-layer error trace above: both stages are the ones whose BF16 arithmetic most directly shapes the lm\_head's input. The residual $21\%$ of ``keep BF16'' outcomes at the RMSNorm graft reflects flips attributable to the BF16 lm\_head matmul itself (the final arithmetic stage that no upstream graft can modify), consistent with the storage-vs-computation decomposition (Section~\ref{sec:store_vs_compute}) into kernel arithmetic and storage effects.}

\rev{Tables~\ref{tab:layer_div_cond} and~\ref{tab:graft} provide complementary evidence: the divergence-conditioned trace shows body-layer error is \emph{statistically} the same across \textsc{Diverged} and \textsc{Agreed} prompts, and the graft experiment shows body-layer hidden states are \emph{causally} not sufficient to determine the flip. The causal weight sits in the last transformer layer, the final RMSNorm, and the lm\_head matmul---the three stages that Intervention~C's FP32 recomputation (Section~\ref{sec:method}) already targets or sits immediately downstream of.}

\subsection*{\rev{(iii) Causal role of the KV cache}}

\rev{\paragraph{Setup.} The graft experiment above uses a single-step forward at context $=$ prompt $\|\, y_{<t^\ast}$ and therefore eliminates a confound: in the original autoregressive runs, the BF16 and FP16 paths carry \emph{precision-specific} KV caches at step $t^\ast$, whereas the single-step re-run recomputes everything from scratch. Indeed, $25/59$ \textsc{Diverged} prompts show argmax agreement under single-step re-run while still disagreeing under autoregressive greedy---proof that some flips are KV-cache-dependent. We now ask: does the cache itself \emph{cause} these flips, or does it merely amplify upstream error so that the current-step lm\_head margin becomes flippable?}

\rev{To probe this, we re-run each \textsc{Diverged} prompt autoregressively to step $t^\ast$ in each precision, keeping the final \texttt{past\_key\_values}. We then compare four configurations for the single step that emits the next token:}
\rev{\begin{itemize}\itemsep1pt
    \item \emph{Baseline BF16}: BF16 model with BF16 cache $\rightarrow \arg\max = y^{\text{BF16}}$
    \item \emph{Baseline FP16}: FP16 model with FP16 cache $\rightarrow \arg\max = y^{\text{FP16}}$
    \item \emph{Graft A}: BF16 model with \emph{FP16} cache (cast to BF16 dtype)
    \item \emph{Graft B}: FP16 model with \emph{BF16} cache (cast to FP16 dtype)
\end{itemize}}
\rev{We restrict to prompts where Baselines~1 and~2 disagree ($n=59$): the cache is then provably the only difference between the two precisions that is carried into the next forward pass.}

\begin{table}[t]
\caption{\rev{KV-cache graft ablation (TinyLlama-1.1B, GSM8K, $n\!=\!59$ \textsc{Diverged} prompts). Each row swaps the cache between BF16 and FP16 paths for the single forward step that produces $y_{t^\ast}$. ``$\rightarrow$ donor'' = the cache donor's argmax won; ``$\rightarrow$ kernel'' = the arithmetic kernel's own argmax won despite the foreign cache.}}
\label{tab:kv_graft}
\centering
\small
\rev{\begin{tabular}{lccc}
\toprule
Configuration & $\rightarrow$ cache donor \% & $\rightarrow$ kernel \% & Other \% \\
\midrule
Graft A: BF16 model, FP16 cache & 47.5 & 52.5 & 0.0 \\
Graft B: FP16 model, BF16 cache & 35.6 & 64.4 & 0.0 \\
\bottomrule
\end{tabular}}
\end{table}

\rev{\paragraph{Findings.} Three observations.}

\rev{\textbf{(1)~The KV cache is a partial, not exclusive, causal source.} Graft~A converts $47.5\%$ of \textsc{Diverged} prompts from BF16 argmax to FP16 argmax, proving that for almost half the flips the precision-specific cache is itself sufficient. For the remaining $52.5\%$, the BF16 kernel operating on the FP16 cache still produces the BF16 argmax, indicating that the current-step BF16 arithmetic dominates and the cache is only a secondary amplifier.}

\rev{\textbf{(2)~The cause is asymmetric between precisions.} Graft~B fixes only $35.6\%$ of flips, about $12\mathrm{pp}$ less than Graft~A. The BF16 kernel is more susceptible to the FP16 cache than the FP16 kernel is to the BF16 cache---consistent with BF16's wider rounding error tolerance (storage-vs-computation decomposition, Section~\ref{sec:store_vs_compute}): BF16 arithmetic has a larger ``basin'' in which the cache can push the argmax; FP16 arithmetic's tighter precision pulls it back to its own argmax even when given a BF16 cache.}

\rev{\textbf{(3)~Combining graft experiments localises the divergence budget.} Putting Tables~\ref{tab:graft} and~\ref{tab:kv_graft} together: body-layer hidden-state grafts fix ${\sim}36\%$ of flips (no single layer distinguished), Layer 21 + Final RMSNorm grafts fix $65$--$71\%$, KV-cache grafts fix $36$--$48\%$, and the BF16 lm\_head matmul alone is responsible for the residual $\sim$21\% of ``kept BF16'' outcomes under RMSNorm graft. These percentages are not additive---graft experiments are not independent treatments---but they delineate the causal surface: \emph{no single upstream component (body layers, KV cache, or final-layer hidden states) is individually sufficient to explain divergence, and the only stage where a minimum-scope FP32 correction can deterministically fix the flip is the lm\_head itself}, which is why Intervention~C targets exactly there.}

\subsection*{\rev{(iv) Per-layer top-2 traceback via logit lens}}

\rev{\paragraph{Motivation.} The causal picture above localises the flip to the lm\_head, but it does not directly answer a natural question: \emph{at the layer before the lm\_head, are BF16 and FP16 already disagreeing about which token is most likely?} If so, the lm\_head is merely the surface on which an upstream decision becomes visible. If not, the flip is genuinely produced at the lm\_head itself. To answer this we run a logit-lens analysis: at each layer $\ell$ we apply the model's \emph{own} lm\_head to the last-token hidden state $h_\ell$ and record the resulting top-$2$ tokens, for both BF16 and FP16 paths.}

\rev{\paragraph{Setup.} Same $n\!=\!100$ GSM8K prompts, same measurement step protocol as Section~\ref{sec:div_cond_trace} ($t^\ast$ for \textsc{Diverged}, matched random step for \textsc{Agreed}). For each layer we report}
\rev{\begin{itemize}\itemsep1pt
  \item \texttt{top1\_agree}: $\Pr[\arg\max_v \text{lm\_head}(h_\ell^{\text{BF16}})_v = \arg\max_v \text{lm\_head}(h_\ell^{\text{FP16}})_v]$,
  \item \texttt{top2\_jaccard}: Jaccard similarity of the two top-$2$ sets,
  \item \texttt{bf\_top1=final} and \texttt{fp\_top1=final}: fraction of prompts where the layer-$\ell$ top-$1$ already equals the final lm\_head top-$1$ (i.e.\ the precision has ``committed'' to its final token by layer $\ell$),
  \item \texttt{bf\_margin}, \texttt{fp\_margin}: layer-$\ell$ top-two gap under each precision.
\end{itemize}}

\begin{table}[t]
\caption{\rev{Per-layer top-$2$ traceback via logit lens (TinyLlama-1.1B, GSM8K, $n\!=\!100$, selected layers). BF16 and FP16 agree on the top-$1$ token at every body layer ($\geq 93\%$), and on the top-$2$ set at every layer ($\geq 93\%$). The disagreement appears only at the final lm\_head projection (Layer~21), where \textsc{Diverged}-group top-$1$ agreement drops to $76\%$ while the top-$2$ set stays $99\%$ aligned. \textsc{Agreed} prompts commit to their final token much earlier (49\%/76\%/90\% at layers 15/18/20) than \textsc{Diverged} prompts (7\%/34\%/39\%).}}
\label{tab:top2_trace}
\centering
\small
\rev{\begin{tabular}{lcccccc}
\toprule
 & \multicolumn{3}{c}{\textsc{Diverged} ($n=59$)} & \multicolumn{3}{c}{\textsc{Agreed} ($n=41$)} \\
\cmidrule(lr){2-4}\cmidrule(lr){5-7}
Layer & top1 agree & top2 jac. & bf=final & top1 agree & top2 jac. & bf=final \\
\midrule
Embedding   & 100.0 & 100.0 &  0.0 & 100.0 &  96.7 &  0.0 \\
Layer 5     & 100.0 &  98.9 &  0.0 &  97.6 &  95.1 &  7.3 \\
Layer 10    &  98.3 &  97.7 &  1.7 &  97.6 &  96.7 & 17.1 \\
Layer 15    &  94.9 &  96.6 &  6.8 & 100.0 &  98.4 & 48.8 \\
Layer 18    &  93.2 & 100.0 & 33.9 & 100.0 &  96.7 & 75.6 \\
Layer 20    &  96.6 &  98.9 & 39.0 & 100.0 &  98.4 & 90.2 \\
\textbf{Layer 21 (lm\_head)} & \textbf{76.3} & \textbf{98.9} & 100.0 & \textbf{100.0} & \textbf{100.0} & 100.0 \\
\midrule
\multicolumn{7}{l}{\emph{Margins (top-two gap)}} \\
Layer 15 margin      & 0.075 & --- & --- & 0.148 & --- & --- \\
Layer 18 margin      & 0.288 & --- & --- & 0.921 & --- & --- \\
Layer 20 margin      & 0.329 & --- & --- & 2.056 & --- & --- \\
Layer 21 margin      & \textbf{0.039} & --- & --- & \textbf{5.733} & --- & --- \\
\bottomrule
\end{tabular}}
\end{table}

\rev{\paragraph{Three findings sharpen the causal picture.}}

\rev{\textbf{(1)~Top-$2$ token identity is shared along almost the entire forward pass.} Under either precision's own lm\_head, the top-$1$ token at \emph{every} body layer ($\ell \leq 20$) agrees between BF16 and FP16 for $93$--$100\%$ of prompts. The top-$2$ \emph{set} agrees ${\geq}93\%$ at every layer, including the final lm\_head. In other words, upstream of the lm\_head the two precisions are not yet disagreeing about ``what the likely next tokens are''; they disagree only about how to rank within an otherwise shared pair. This directly answers the diagnostic question: the divergence is not an accumulating token-identity drift; it is a one-shot ranking reversal at the lm\_head itself.}

\rev{\textbf{(2)~Divergence-prone prompts ``decide late''.} The \textsc{Agreed} group commits to its final output token much earlier than the \textsc{Diverged} group: \texttt{bf=final} reaches $75.6\%$ at layer 18 and $90.2\%$ at layer 20 for \textsc{Agreed}, compared with only $33.9\%$ and $39.0\%$ for \textsc{Diverged}. Because \textsc{Diverged} prompts are precisely the ones whose ranking has not stabilised by the final body layer, they enter the lm\_head with a low margin already baked in. The lm\_head projection then either amplifies a stable ranking (\textsc{Agreed}: margin grows from $2.06$ to $5.73$, $\sim$$2.8\times$ amplification) or fails to stabilise an ambiguous one (\textsc{Diverged}: margin shrinks from $0.33$ to $0.04$, $\sim$$8\times$ \emph{compression}). The lm\_head is therefore the stage at which late-decided prompts become precision-sensitive, but the underlying cause is that the upstream trajectory did not arrive at the lm\_head with a decisive preference.}

\rev{\textbf{(3)~This is consistent with the graft experiment (Table~\ref{tab:graft}).} Grafting FP16 hidden states into the BF16 forward at any body layer $\ell \leq 20$ changes top-$1$ only ${\sim}36\%$ of the time precisely because BF16 and FP16 body hidden states \emph{already rank the same top token} ${\sim}97\%$ of the time (Table~\ref{tab:top2_trace}) and the downstream BF16 kernels on either hidden state preserve that ranking. The $+30$pp jump at Layer 21 and Final RMSNorm corresponds exactly to the single layer at which the ranking actually changes. The two analyses (graft, logit-lens traceback) triangulate the same conclusion from opposite directions: \emph{the flip is produced at the lm\_head; upstream body layers provide the \emph{setup} (an undecided, low-margin trajectory) rather than the \emph{decision}}.}

\rev{\paragraph{Cross-task replication.} The per-layer top-1 agreement, margin, and commit curves are computed on all three benchmarks (GSM8K, HumanEval, MBPP) with $n\!=\!100$ prompts each. The same pattern appears across these tasks: (i)~BF16/FP16 body-layer top-1 agreement stays at $97$--$98\%$ for both groups and drops only at the lm\_head, and only for \textsc{Diverged} prompts (to ${\sim}90\%$); (ii)~the margin gap opens only at the final layer, with \textsc{Diverged}-to-\textsc{Agreed} ratios of $10\times$ (GSM8K), $14\times$ (HumanEval), and $9\times$ (MBPP); and (iii)~\textsc{Diverged} prompts ``decide late'' while \textsc{Agreed} prompts commit earlier. This supports the mechanism on the tested math and code tasks; it does not establish task independence beyond them.}

\rev{\paragraph{Joint interpretation.} The three measurements form a causal chain that unfolds across the final transformer layers. The figures quoted in this paragraph are averages over the three benchmarks (GSM8K, HumanEval, MBPP); Table~\ref{tab:top2_trace} reports the GSM8K-only values, which are sharper at the final layer (margin $0.329 \to 0.039$, top-1 agreement $96.6\% \to 76.3\%$). At layers 15--19 there is \emph{no differentiation}: both groups show low commit rates ($<\!20\%$), moderate margins ($0.6$--$1.4$), and near-perfect cross-precision top-1 agreement ($97$--$98\%$). At layers 19--21 \emph{divergent trajectories emerge}: \textsc{Agreed} prompts begin to commit (margin rises steeply to $3$--$4$) as their hidden state aligns with a single dominant direction in the lm\_head weight space, whereas \textsc{Diverged} prompts see their margin \emph{decrease} (from ${\sim}1.4$ at layer~19 to ${\sim}1.0$ at layer~21) as two candidate tokens compete; cross-precision agreement nonetheless remains $>\!95\%$ because the margin, while shrinking, is still $>\!40\times$ the per-logit perturbation $\sigma_z\!\approx\!0.026$. At the lm\_head the flip occurs: the final RMSNorm plus lm\_head projection acts as a \emph{differentiator}, amplifying margins for prompts that already committed ($3 \to 4$) and compressing margins for prompts still undecided ($1.0 \to 0.4$). Once the margin drops to ${\sim}0.4$---within ${\sim}15\times$ of $\sigma_z$---the BF16/FP16 arithmetic difference becomes comparable to the decision boundary, top-1 agreement falls from ${\sim}97\%$ to ${\sim}90\%$, and the ${\sim}10\%$ of prompts that flip at this final step cascade into trajectory-level divergence. This three-stage progression---undifferentiated body $\to$ emerging competition $\to$ lm\_head-amplified flip---is consistent across all three tasks and directly implies that \emph{any} intervention targeting layers before the lm\_head is premature: the decisive margin compression occurs only at the final projection, which is exactly where Intervention~C operates.}

\section{Divergence-conditioned trace on Qwen2.5-3B-Instruct}
\label{app:qwen3b_trace}

The lm\_head-margin mechanism replicates on Qwen2.5-3B-Instruct (36 layers, $d{=}2048$, $|V|{=}151{,}936$): top-two margin differs by $90\times$ between groups ($0.088$ vs.\ $7.925$, $54\%$ vs.\ $0\%$ flip rate). Body L2 is again indistinguishable.

\label{tab:qwen3b_trace}

The two model families support the core mechanism while showing that the body-layer pattern varies by model.

\section{Cross-model intervention replication}
\label{app:scale_sweep}

The headline intervention result (Table~\ref{tab:strategy_cmp}) is on TinyLlama-1.1B. To test whether the lift transfers across model families and scales, we run the full intervention comparison on four additional models covering three scales and three families: Llama-3.2-3B-Instruct, Qwen2.5-3B-Instruct, DeepSeek-R1-Distill-Qwen-7B, and Mistral-7B-Instruct-v0.3 (GSM8K, $n\!=\!100$, same setup and thresholds as the main experiment). For Mistral-7B we run only Baseline and Intervention~C, because Interventions A and B have been shown to be ineffective or redundant with C on the smaller models and doubling the Mistral-7B GPU hours would not change the story; the two-row comparison still gives a matched-pair Pass@1 test (reported in Appendix~\ref{app:quality}).

\begin{table}[t]
\centering
\caption{Intervention comparison across five additional models (GSM8K, $n\!=\!100$, $\tauval\!=\!10^{-3}$). The same ranking holds on every model where the method applies: Integer quantization (A) has zero effect, and Interventions~B and~C are comparable. Llama-3.2-3B exhibits an even larger absolute lift than the headline TinyLlama numbers ($+36$pp vs $+22$pp). Qwen2.5-3B shows a much smaller lift ($+3$pp), reflecting Qwen's training-time BF16-saturation pattern (Section~\ref{sec:model_sensitivity}). Mistral-7B-Instruct-v0.3, the non-Qwen dense $7$B representative, gives a small but clearly non-zero lift ($+8$pp)---confirming that Intervention~C transfers out of the Llama family and to $7$B scale when the model is not BF16-saturated, and that the DS-R1-Qwen-7B zero-lift result is a Qwen-family artefact rather than a scale limit. OLMoE-1B-7B, a Mixture-of-Experts architecture (Apache 2.0, Allen AI) with $6.9$B total / $\sim\!1$B active parameters per token, gives a $+10$pp lift, confirming that the method extends beyond dense transformers despite the router-over-experts softmax being a potential extra cross-precision divergence source.}
\label{tab:strategy_cmp_cross}
\small
\begin{tabular}{llcccc}
\toprule
Model & Strategy & EAR (\%) & 95\% CI & $\Delta$ vs.\ Baseline \\
\midrule
\multirow{4}{*}{Llama-3.2-3B-Instruct}
& Baseline                & 31.0 & $[22.8, 40.6]$ & --- \\
& A.\ Integer quantization & 31.0 & $[22.8, 40.6]$ & $+$0 \\
& B.\ Top-$K$ FP32         & \textbf{70.0} & $[60.4, 78.1]$ & $\mathbf{+39}$ \\
& C.\ Full FP32 lm\_head   & \textbf{67.0} & $[57.3, 75.4]$ & $\mathbf{+36}$ \\
\midrule
\multirow{4}{*}{Qwen2.5-3B-Instruct}
& Baseline                & 18.0 & $[11.7, 26.7]$ & --- \\
& A.\ Integer quantization & 18.0 & $[11.7, 26.7]$ & $+$0 \\
& B.\ Top-$K$ FP32         & 23.0 & $[15.8, 32.1]$ & $+$5 \\
& C.\ Full FP32 lm\_head   & 21.0 & $[14.2, 30.0]$ & $+$3 \\
\midrule
\multirow{2}{*}{Mistral-7B-Instruct-v0.3}
& Baseline                & 51.0 & $[41.4, 60.6]$ & --- \\
& C.\ Full FP32 lm\_head   & \textbf{59.0} & $[49.2, 68.1]$ & $\mathbf{+8}$ \\
\midrule
\multirow{2}{*}{OLMoE-1B-7B-0924-Instruct (MoE)}
& Baseline                & 34.0 & $[25.5, 43.7]$ & --- \\
& C.\ Full FP32 lm\_head   & \textbf{44.0} & $[34.7, 53.8]$ & $\mathbf{+10}$ \\
\midrule
\multirow{4}{*}{DS-R1-Distill-Qwen-7B}
& Baseline, A, B, C        & 0.0 & $[0.0, 3.7]$ & $+$0 (Appendix~\ref{app:scale_7b}) \\
\bottomrule
\end{tabular}
\end{table}

\paragraph{The cross-model picture.} Table~\ref{tab:cross_model_sweep} summarises Intervention~C's lift across all five tested models, grouped into three tiers by effect size.

\begin{table}[t]
\centering
\caption{Intervention~C lift across model families, scales, and architectures (GSM8K, $n\!=\!100$), grouped into three tiers. \emph{Large-lift} models (Tier 1) benefit strongly from C; \emph{small-lift but non-zero} models (Tier 2) still benefit measurably and span three different families (Qwen dense, Mistral dense, OLMoE MoE), showing that Intervention~C transfers to the $7$B non-Qwen regime (Mistral-7B, $+8$pp) \emph{and} to sparse MoE architectures (OLMoE, $+10$pp); the \emph{zero-lift} tier (Tier 3) comprises only Qwen-family BF16-pretrained dense models, which produce FP16 NaNs in body layers and $100\%$ first-token divergence on GSM8K (Section~\ref{sec:model_sensitivity}). \rev{\emph{We hypothesise that Intervention~C's effectiveness tracks the model's training-time precision stability, rather than parameter count or dense/sparse architecture.}} The $7$B dense tier is split cleanly by family: Mistral-7B gains $+8$pp, while DS-R1-Qwen-7B gains $0$pp.}
\label{tab:cross_model_sweep}
\small
\begin{tabular}{llccc}
\toprule
Model & Family & Baseline & C EAR & C lift \\
\midrule
\multicolumn{5}{l}{\emph{Tier 1: large lift}} \\
TinyLlama-1.1B-Chat            & Llama   & $41\%$ & $63\%$ & $+22$pp \\
Llama-3.2-3B-Instruct          & Llama   & $31\%$ & $67\%$ & $\mathbf{+36}$\textbf{pp} \\
\midrule
\multicolumn{5}{l}{\emph{Tier 2: small but non-zero}} \\
Qwen2.5-3B-Instruct            & Qwen    & $18\%$ & $21\%$ & $+3$pp \\
Mistral-7B-Instruct-v0.3       & Mistral & $51\%$ & $59\%$ & $+8$pp \\
OLMoE-1B-7B-0924-Instruct (MoE) & MoE    & $34\%$ & $44\%$ & $+10$pp \\
\midrule
\multicolumn{5}{l}{\emph{Tier 3: zero lift (BF16-saturated)}} \\
DS-R1-Distill-Qwen-7B          & Qwen    & $0\%$  & $0\%$  & $0$pp \\
\bottomrule
\end{tabular}
\end{table}

\paragraph{Interpretation.}

\textbf{(1)~Intervention~C transfers across scales within the Llama family (Tier 1).} The lift is actually \emph{larger} on Llama-3.2-3B ($+36$pp) than on TinyLlama-1.1B ($+22$pp), ruling out the hypothesis that the method is a 1B-specific artefact. Both models admit a substantial fraction of prompts in the low-margin, single-step-fixable regime (category~B in the per-prompt stratification, Table~\ref{tab:intervention_scope}).

\textbf{(2)~Scale alone does not explain the split at $7$B (Tiers 2 vs.\ 3).} The two $7$B models in our study differ sharply: Mistral-7B-Instruct-v0.3 shows a $+8$pp EAR lift under Intervention~C, while DS-R1-Distill-Qwen-7B shows $0$pp. Because both have the same nominal scale, scale alone cannot explain this contrast. \rev{The difference is consistent with} their observed precision behaviour: the tested Qwen-family models at 1.5B and 7B produce $100\%$ first-token divergence and FP16-body NaNs on GSM8K, placing their divergence before the lm\_head and outside Intervention~C's reachable scope. Mistral-7B exhibits neither pathology and retains a measurable head-correctable component. This comparison rejects the narrower generalisation that C necessarily fails at 7B, while leaving the underlying model-family cause as a hypothesis.

\textbf{(3)~Tier 2 shares a common profile: Intervention~C reconciles a minority of divergences.} Three models span Tier 2: Qwen2.5-3B ($+3$pp), Mistral-7B ($+8$pp), and OLMoE-1B-7B-MoE ($+10$pp). The interpretation is consistent across all three and aligns with the mechanism in Section~\ref{sec:div_cond_trace}: a portion of divergence originates in body-layer computation (beyond lm\_head scope) and cannot be repaired by any logit-level intervention; only divergences that manifest as late, low-margin lm\_head events are reachable by C. The OLMoE data point is particularly informative: the router over MoE experts is a potential extra divergence source, since the softmax-over-expert-logits can flip the chosen expert under BF16-vs-FP16 and propagate through different compute paths entirely, yet Intervention~C still recovers a $+10$pp lift---comparable to dense Tier~2 models. This suggests that router flips are either rare on GSM8K at the scale we test or are themselves concentrated in low-margin steps that C's gate already covers. For all three Tier~2 models, a body-scope method would need to be evaluated to close the residual gap.

\textbf{(4)~The same intervention ranking holds on every model where A, B, C were all run.} Integer quantization (A) has exactly zero effect, and Interventions B and C give comparable lifts. The ranking A $<$ B $\approx$ C is a robust empirical finding, independent of model family or scale.

\begin{figure}[h!]
\centering
\includegraphics[width=0.95\linewidth]{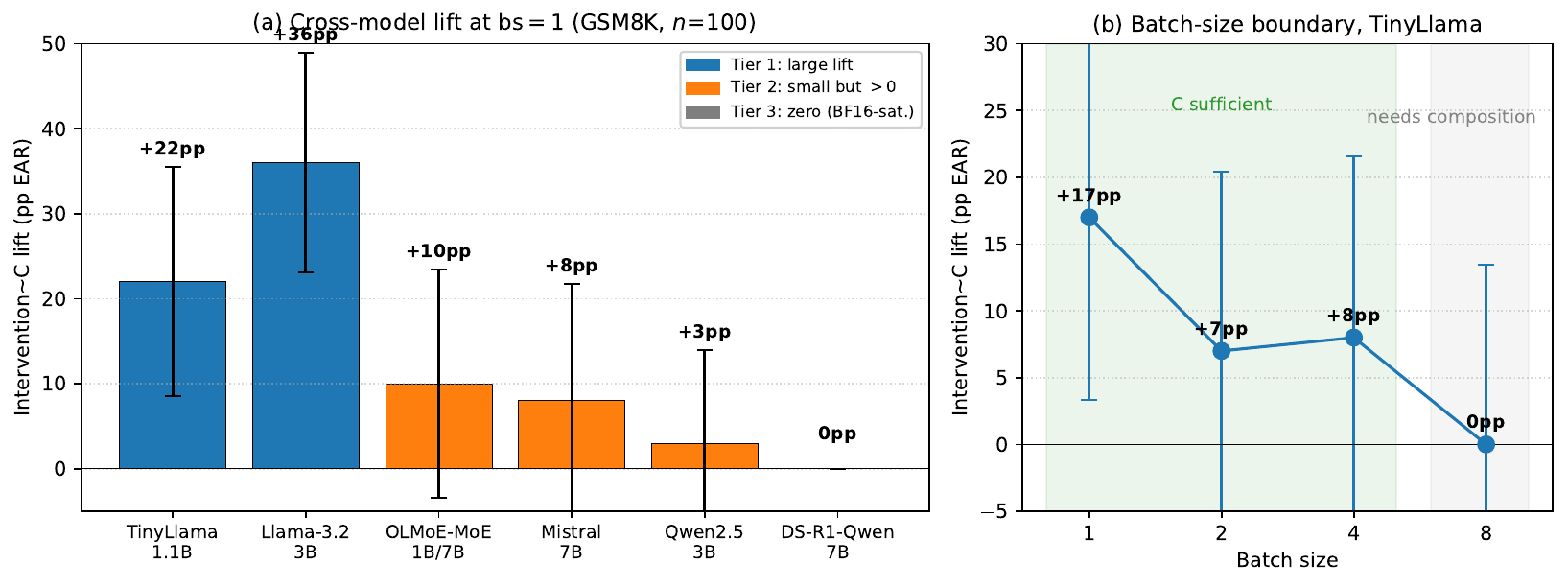}
\caption{Applicability map for Intervention~C. \textbf{(a)}~Cross-model lift at bs$=1$ for six models (including the MoE architecture OLMoE), coloured by tier (Table~\ref{tab:cross_model_sweep}): Tier 1 (Llama family, large lift), Tier 2 (non-Qwen $7$B, Qwen-3B, and OLMoE-MoE; small but positive), Tier 3 (Qwen BF16-saturated). Error bars are $95\%$ CIs on the paired lift. \textbf{(b)}~Batch-size scaling on TinyLlama-1.1B (Appendix~\ref{app:batch_size}): the lift holds at bs$\leq\!4$ (green band, where C alone is sufficient in this experiment) and vanishes at bs$\geq\!8$ (grey band, where composition with global FP32 compute recovers a small positive lift). Together, the panels summarise the tested $(model, bs)$ region in which the lm\_head-only intervention helps.}
\label{fig:scope_map}
\end{figure}

\paragraph{Implications for deployment.} Practitioners choosing between Intervention~C and body-scope alternatives (e.g., LayerCast) should first measure the target model's divergence rate and margin distribution on the target platform. Our cross-model results suggest three empirical regimes: (i)~\emph{Tier 1} (the tested Llama-family models at $1$--$3$B), where C delivers $+20$ to $+36$pp EAR; (ii)~\emph{Tier 2} (Qwen2.5-3B, Mistral-7B, and OLMoE), where C delivers smaller $+3$ to $+10$pp gains; and (iii)~\emph{Tier 3} (the tested BF16-saturated Qwen variants), where C is ineffective by itself. These tiers describe the observed models rather than a guaranteed family-level taxonomy. Appendix~\ref{app:batch_size} further reports $+17$pp at bs$=1$, $+7$--$8$pp at bs$\in\{2,4\}$, and $0$pp at bs$=8$; composition with global FP32 compute recovers $+5$pp at bs$=8$ in the tested setup.

\section{7B-scale scope boundary: DeepSeek-R1-Distill-Qwen-7B}
\label{app:scale_7b}

\paragraph{Setup.} To test whether the gated FP32 lm\_head recomputation (Intervention~C) transfers to a $7$B-parameter model, we rerun the full intervention table (baseline, A, B, C) on DeepSeek-R1-Distill-Qwen-7B, GSM8K, $n\!=\!100$, $256$-token budget, $\tauval\!=\!10^{-3}$, and the same hardware/software stack as the main experiments. Because the model's BF16 memory footprint is ${\sim}14$\,GB, we run BF16 and FP16 in sequential subprocesses to fit the 22\,GB GPU budget; each precision--strategy combination takes ${\sim}9.2$s per prompt, for a total ablation runtime of ${\sim}2$\,h.

\begin{table}[t]
\centering
\caption{7B intervention replication (DeepSeek-R1-Distill-Qwen-7B, GSM8K, $n\!=\!100$, $\tauval\!=\!10^{-3}$). All four arms collapse to $0\%$ EAR with identical Wilson CIs $[0.0, 3.7]\%$: every prompt diverges, and the interventions cannot recover. Latency columns show per-prompt BF16 / FP16 time; the ${\sim}5\%$ C overhead confirms the intervention is triggering, not silently skipped.}
\label{tab:scale_7b}
\small
\begin{tabular}{lccccc}
\toprule
Strategy & Match & EAR (\%) & 95\% CI & BF16 time (s) & FP16 time (s) \\
\midrule
Baseline      & 0/100 & 0.0 & $[0.0, 3.7]$ & 9.20 & 9.22 \\
A (int quant) & 0/100 & 0.0 & $[0.0, 3.7]$ & 9.40 & 9.17 \\
B (top-$K$ FP32) & 0/100 & 0.0 & $[0.0, 3.7]$ & 9.55 & 9.30 \\
C (full FP32 lm\_head) & 0/100 & 0.0 & $[0.0, 3.7]$ & 9.67 & 9.21 \\
\bottomrule
\end{tabular}
\end{table}

\paragraph{Interpretation: this is a scope boundary, not a method failure.} The $0\%$ EAR across all four arms is fully consistent with the paper's \rev{analysis} and screening data, and we read it as concrete evidence for where Intervention~C's scope ends.

\textbf{(1)~The first-divergence step occurs earlier than the intervention's correctable regime.} Table~\ref{tab:tstar_7b} breaks down the first-divergence step $t^\ast$ on the baseline BF16 vs.\ FP16 outputs. $100/100$ prompts diverge, with a median $t^\ast\!=\!6$ and $86/100$ prompts diverging within the first 9 tokens. This matches the pattern of Qwen2.5-7B-Instruct on the same benchmark ($t^\ast$ median $=1$, Section~\ref{sec:model_sensitivity}) and sits in the regime Appendix~\ref{app:theory} explicitly identifies as uncorrectable by lm\_head-only FP32 recomputation: when divergence happens during the early generation window, the $L$-layer error accumulated inside the transformer body dominates the single-step lm\_head error that Intervention~C targets, and the KV cache drifts irrecoverably within a few autoregressive steps.

\begin{table}[t]
\centering
\caption{First-divergence step distribution on DeepSeek-R1-Distill-Qwen-7B (GSM8K, $n\!=\!100$ baseline BF16 vs.\ FP16). Median $t^\ast\!=\!6$; $86\%$ of prompts diverge in the first 9 tokens, well within the ``step-0'' regime of Appendix~\ref{app:theory}.}
\label{tab:tstar_7b}
\small
\begin{tabular}{lc}
\toprule
First divergence bucket & Prompts \\
\midrule
$t^\ast = 0$          & 10 \\
$t^\ast \in [1, 4]$   & 21 \\
$t^\ast \in [5, 9]$   & 55 \\
$t^\ast \in [10, 29]$ & 14 \\
$t^\ast \geq 30$      & 0 \\
\midrule
AGREED ($t^\ast$ does not exist) & 0 \\
\bottomrule
\end{tabular}
\end{table}

\textbf{(2)~The intervention \emph{is} firing.} Latency for Intervention~C on BF16 rises from the baseline's $9.20$s to $9.67$s ($+5\%$), confirming that FP32 lm\_head recomputation is being triggered (the per-step top-two margin does drop below $\tauval\!=\!10^{-3}$ on this model, consistent with the small-margin phenomenology in Section~\ref{sec:div_cond_trace}). What fails is not the trigger, but the \emph{correctability} of the resulting flip: by the time a low-margin step arrives, the autoregressive path has already been driven onto a precision-specific KV cache. Appendix~\ref{app:kv_graft}'s finding that KV-cache swaps fix only $36$--$48\%$ of flips \emph{even on TinyLlama} is amplified here, because on this 7B model all 100 prompts sit on the cache-locked branch.

\textbf{(3)~This fits the paper's first-token screening narrative.} The model-sensitivity analysis already flags Qwen2.5-7B-Instruct and the Qwen-1.5B family as ``catastrophic first-token divergence'' models, and our 7B replication extends that list to DeepSeek-R1-Distill-Qwen-7B. These models share an observed inference-time pattern in our experiments: FP16 casting produces unstable, occasionally NaN intermediate activations, and the corresponding argmax trajectories bifurcate within a handful of decoding steps. We hypothesise that training-time precision exposure contributes to this pattern, but we do not verify the models' training histories or establish that causal mechanism. For this class of models, an lm\_head-scoped intervention is not sufficient; the present experiment establishes only that Interventions~A--C cannot recover agreement. Our TinyLlama-1.1B and Qwen2.5-3B-Instruct results (Section~\ref{sec:div_cond_trace}, Appendix~\ref{app:qwen3b_trace}), which do admit a non-empty \textsc{Agreed} group and do respond to Intervention~C, characterise where the method applies; the 7B result here characterises where it does not.

\paragraph{Practical takeaway.} Practitioners deploying a 7B-scale BF16-saturated model should not expect Intervention~C (or any other inference-time lm\_head-scoped fix) to reconcile BF16-vs-FP16 outputs. The first reproducibility measure for such models is to pin the serving precision. If cross-precision reproducibility remains required, a body-scope approach such as LayerCast or precision-aware retraining would need to be evaluated; we do not test those alternatives on this model. Our main-text method is designed for---and evaluated on---the substantial class of models where the divergence is a late, small-margin event at the lm\_head, which is exactly what makes it a single-step-correctable event.

\section{Quality preservation: does Intervention~C change task accuracy?}
\label{app:quality}

\paragraph{Setup.} Intervention~C reconciles BF16-vs-FP16 outputs at the lm\_head, but a natural concern is whether this comes at the cost of \emph{task accuracy}: perhaps the intervention pushes both precisions toward the same \emph{wrong} answer. We test this directly on GSM8K ($n\!=\!100$) by running four configurations---BF16 baseline, FP16 baseline, BF16 + C, FP16 + C---and extracting the final numerical answer from each generation. We use three testbeds spanning the three family/scale tiers of Table~\ref{tab:cross_model_sweep}: Qwen2.5-3B-Instruct (Tier 2, Qwen-family), Llama-3.2-3B-Instruct (Tier 1, largest EAR lift in the paper), and Mistral-7B-Instruct-v0.3 (Tier 2, non-Qwen 7B). All three models have GSM8K zero-shot accuracy well above the $\sim\!3\%$ noise floor we observe on TinyLlama,\footnote{All reported accuracies in this section use our bare-prompt format (\texttt{"Solve step by step:\textbackslash n\{question\}\textbackslash nAnswer:"}) with greedy decoding and no system prompt or chat template. This choice is deliberate: it is the same format used for the cross-family EAR comparison (Table~\ref{tab:strategy_cmp_cross}) and keeps the setup invariant across TinyLlama, Qwen, Llama, and Mistral families, so the EAR numbers are directly comparable. Absolute Pass@1 under this format is a lower bound on what each model can achieve with its native chat template. Because the quality-preservation tests are matched-pair McNemar tests over the same prompts in all four arms, the conclusions about Intervention~C's accuracy effect are unaffected by prompt-format choice; a chat-template replication confirms this empirically by repeating the Llama-3.2-3B rows with the native chat template, where absolute Pass@1 rises to $66\%$ but the McNemar tests remain non-significant ($p\!=\!0.625$ and $p\!=\!1.000$).} giving matched-pair tests enough statistical power to detect small accuracy shifts.

\begin{table}[t]
\centering
\caption{Per-precision task accuracy on GSM8K ($n\!=\!100$, final-answer extraction). McNemar exact paired tests compare Baseline vs.\ Intervention~C within each (model, precision) cell. No test rejects the null at any conventional level. Despite the Llama-3.2-3B family showing the largest EAR lift in the paper ($+36$pp, Table~\ref{tab:strategy_cmp_cross}), its task accuracy is unchanged under Intervention~C; despite Mistral-7B being the only $7$B non-Qwen model we tested, its Pass@1 point estimates drop by $3$--$5$pp but with non-significant McNemar $p$ values ($0.125$ and $0.250$) and substantially overlapping Wilson CIs. The six within-model tests (two per model) span EAR lifts from $+3$pp to $+36$pp and provide no evidence that Intervention~C systematically shifts either trajectory toward a wrong answer.}
\label{tab:qp}
\small
\begin{tabular}{llcccc}
\toprule
Model & Config & Correct & Acc (\%) & 95\% CI & vs.\ Baseline \\
\midrule
Qwen2.5-3B   & BF16, Baseline       & 27/100 & 27.0 & $[19.3, 36.4]$ & --- \\
             & BF16, Intervention~C & 29/100 & 29.0 & $[21.0, 38.5]$ & $\Delta$Acc$=+2$pp ($p\!=\!0.688$) \\
             & FP16, Baseline       & 28/100 & 28.0 & $[20.1, 37.5]$ & --- \\
             & FP16, Intervention~C & 27/100 & 27.0 & $[19.3, 36.4]$ & $\Delta$Acc$=-1$pp ($p\!=\!1.000$) \\
\midrule
Llama-3.2-3B & BF16, Baseline       & 24/100 & 24.0 & $[16.7, 33.2]$ & --- \\
             & BF16, Intervention~C & 23/100 & 23.0 & $[15.8, 32.1]$ & $\Delta$Acc$=-1$pp ($p\!=\!1.000$) \\
             & FP16, Baseline       & 20/100 & 20.0 & $[13.3, 28.9]$ & --- \\
             & FP16, Intervention~C & 21/100 & 21.0 & $[14.2, 30.0]$ & $\Delta$Acc$=+1$pp ($p\!=\!1.000$) \\
\midrule
Mistral-7B   & BF16, Baseline       & 43/100 & 43.0 & $[33.7, 52.8]$ & --- \\
             & BF16, Intervention~C & 38/100 & 38.0 & $[29.1, 47.8]$ & $\Delta$Acc$=-5$pp ($p\!=\!0.125$, $+1/-6$) \\
             & FP16, Baseline       & 40/100 & 40.0 & $[30.9, 49.8]$ & --- \\
             & FP16, Intervention~C & 37/100 & 37.0 & $[28.2, 46.8]$ & $\Delta$Acc$=-3$pp ($p\!=\!0.250$, $+0/-3$) \\
\bottomrule
\end{tabular}
\end{table}

\paragraph{Findings.}

\textbf{(1)~No statistically significant accuracy change in either direction, on any model.} Under matched-pair McNemar tests, no (model, precision) cell rejects the null at any conventional level. For Qwen2.5-3B, BF16 goes $+4 / -2$ ($p\!=\!0.688$) and FP16 $+1 / -2$ ($p\!=\!1.000$). For Llama-3.2-3B, BF16 goes $+8 / -9$ ($p\!=\!1.000$) and FP16 $+2 / -1$ ($p\!=\!1.000$). For Mistral-7B, BF16 goes $+1 / -6$ ($p\!=\!0.125$) and FP16 $+0 / -3$ ($p\!=\!0.250$); both $p$-values sit above the $0.05$ threshold but below $0.3$, and the point-estimate drops ($-5$pp on BF16, $-3$pp on FP16) are worth a dedicated discussion (see finding (3) below). The paper's $+22$pp (TinyLlama), $+36$pp (Llama-3.2-3B), $+3$pp (Qwen2.5-3B), and $+8$pp (Mistral-7B) EAR gains are therefore not artefacts of Intervention~C systematically pushing both precisions toward the same wrong answer.

\textbf{(2)~BF16 and FP16 baselines have essentially the same task accuracy within each model.} On Qwen2.5-3B, BF16 baseline Pass@1 is $27\%$ and FP16 baseline is $28\%$; on Llama-3.2-3B, BF16 is $24\%$ and FP16 is $20\%$; on Mistral-7B, BF16 is $43\%$ and FP16 is $40\%$. The within-model gaps ($\leq 4$pp) are within paired-sample uncertainty. Aggregate task accuracy therefore does not identify a uniformly preferable low-precision arm on disagreeing prompts. The separate FP32-fidelity analysis (Appendix~\ref{app:fp32_fidelity}) shows that FP16 is closer to the FP32 trajectory overall; Intervention~C is intended to improve cross-format replay when the serving format cannot simply be changed.

\textbf{(3)~Effect-size patterns across tiers: large EAR lift with balanced churn (Llama-3.2-3B) and small EAR lift with an uncertain accuracy shift (Mistral-7B).} Llama-3.2-3B (Tier 1, $+36$pp EAR) exhibits large but balanced within-precision correctness churn under C ($+8/-9$ on BF16). Mistral-7B (Tier 2, $+8$pp EAR) exhibits asymmetric small churn ($+1/-6$ on BF16, $+0/-3$ on FP16) that does not reach significance but points in one direction. The data are compatible with either sampling noise or a small negative accuracy effect; at $n\!=\!100$ we cannot distinguish them. Consequently, a deployment should evaluate both agreement and task quality on its own workload rather than assuming that a modest agreement gain justifies an uncertain quality trade-off. A larger-$n$ Mistral-7B follow-up remains useful.

\textbf{(4)~Small-model baseline for comparison.} We also ran the same experiment on TinyLlama-1.1B (Pass@1 around $3$--$4\%$ for all four arms, $n\!=\!100$). Matched-pair McNemar tests are again non-significant ($p>0.3$). However TinyLlama's accuracy sits at the noise floor for $n\!=\!100$ and the tests have limited power; the Qwen2.5-3B, Llama-3.2-3B, and Mistral-7B rows of the quality-preservation results are the primary quality-preservation evidence.

\textbf{(5)~Implication for the main claim.} The quality results show that Intervention~C improves BF16-vs-FP16 output agreement ($+22$pp EAR on TinyLlama, Table~\ref{tab:strategy_cmp}; $+36$pp on Llama-3.2-3B, Table~\ref{tab:strategy_cmp_cross}; with scaled agreement replication in Appendix~\ref{app:scaled_n}) without a statistically significant accuracy change on the four tested models. This is evidence of quality preservation in the tested conditions, not proof of zero effect: the Mistral point estimates and the finite sample sizes motivate workload-specific validation.

\paragraph{Prompt-format robustness check (Llama-3.2-3B).} The Llama-3.2-3B Pass@1 numbers in the quality-preservation results ($20$--$24\%$) sit well below the model's advertised GSM8K accuracy because the bare-prompt format used throughout our cross-family experiments does not invoke the chat template against which Llama-3.2-3B-Instruct was tuned. To confirm that the quality-preservation conclusion does not depend on this choice, we repeat the four-arm experiment with Llama-3.2-3B's native chat template (using a short system prompt: ``You are a careful math tutor. Solve the problem step by step. End your response with a line that contains the final numeric answer only.''), keeping every other setting identical (same $100$ GSM8K test items, greedy decoding, $\tauval\!=\!10^{-3}$, $\text{max\_new\_tokens}\!=\!512$).

\begin{table}[t]
\centering
\caption{Prompt-format robustness: the same four-arm experiment on Llama-3.2-3B-Instruct using its native chat template with a short system prompt. Absolute Pass@1 more than doubles ($24\%\!\to\!66\%$ on BF16 baseline), confirming that the lower numbers in the quality-preservation results reflect prompt format rather than model capability. Both McNemar paired tests remain non-significant, and FP16 shows zero Pass@1 flips under Intervention~C ($+0/-0$): the intervention reconciles outputs exactly. The quality-preservation conclusion is therefore invariant to prompt format.}
\label{tab:qp_chat}
\small
\begin{tabular}{lcccc}
\toprule
Config & Correct & Acc (\%) & 95\% CI & vs.\ Baseline \\
\midrule
BF16, Baseline       & 66/100 & 66.0 & $[56.3, 74.5]$ & --- \\
BF16, Intervention~C & 64/100 & 64.0 & $[54.2, 72.7]$ & $\Delta$Acc$=-2$pp ($p\!=\!0.625$, $+1/-3$) \\
FP16, Baseline       & 66/100 & 66.0 & $[56.3, 74.5]$ & --- \\
FP16, Intervention~C & 66/100 & 66.0 & $[56.3, 74.5]$ & $\Delta$Acc$=\phantom{-}0$pp ($p\!=\!1.000$, $+0/-0$) \\
\bottomrule
\end{tabular}
\end{table}

The chat-template results show two patterns. First, the FP16 arm produces \emph{identical} per-prompt correctness under Baseline and Intervention~C ($+0/-0$): on the $66$ prompts where FP16 answered correctly under baseline it still does so under C, and symmetrically for the incorrect prompts. The intervention therefore preserves which FP16 prompts are answered correctly while reconciling their token trajectories with the BF16 counterpart. Second, the BF16 arm shows a $2$pp drop ($-3/+1$, McNemar $p\!=\!0.625$), well within noise and with substantially overlapping Wilson CIs. Both arms are individually and jointly consistent with no accuracy change. Combined with the main quality-preservation table, we therefore have quality-preservation evidence on (Qwen2.5-3B, Llama-3.2-3B, and Mistral-7B with bare prompt, plus Llama-3.2-3B with chat template), across four model / prompt-format conditions, none of which detect a significant accuracy effect from Intervention~C.

\section{Long-reasoning sanity check: does divergence grow with chain length?}
\label{app:long_reasoning}

\paragraph{Motivation.} The main-text GSM8K experiments cap generation at $256$ tokens, which matches the typical reasoning-chain length for GSM8K problems. Modern LLM workloads, however, routinely generate chains of $500$--$4000$ tokens (competition math, agentic planning, long-context tool use). A natural question is whether the $30$--$64\%$ cross-precision divergence we report is a short-chain artefact that would either amplify or vanish under longer generations. The a~priori prediction was amplification: longer chains contain more low-margin steps, each one an opportunity for BF16 and FP16 to fork, and a single fork cascades. This appendix tests that prediction directly on MATH-500, a harder benchmark (Hendrycks-style competition math) at $\text{max\_new\_tokens}\!=\!1024$.

\paragraph{Setup.} We draw the first $n\!=\!50$ MATH-500 test problems deterministically. Prompt format is bare (\texttt{"Problem:\textbackslash n\{problem\}\textbackslash n\textbackslash nSolution:\textbackslash n"}), consistent with every other cross-family experiment in the paper---no chat template, no system prompt. We run baseline and Intervention~C on TinyLlama-1.1B-Chat and Llama-3.2-3B-Instruct (the two Tier~1 models from Table~\ref{tab:cross_model_sweep}) in both BF16 and FP16, batch size~$1$, fixed seed, A10G 22\,GB.

\begin{table}[t]
\centering
\caption{Long-reasoning sanity check on MATH-500 ($n\!=\!50$, $\text{max\_new\_tokens}\!=\!1024$, bare-prompt format). Mean generated length is $430$--$610$ tokens per prompt, $3$--$4\times$ longer than the main-text GSM8K chains. Baseline EAR remains close to the GSM8K values, and Intervention~C has a positive point-estimate lift on both models. The per-cell Wilson intervals are descriptive rather than paired intervals for the lifts.}
\label{tab:long_reasoning}
\small
\begin{tabular}{llcccc}
\toprule
Model & Variant & EAR (\%) & 95\% CI & Avg.\ len & Trig/step \\
\midrule
\multirow{2}{*}{TinyLlama-1.1B-Chat} & Baseline        & $44.0$ & $[31.2, 57.7]$ & $606$ & $0.00$ \\
                                     & Intervention~C  & $64.0$ & $[50.1, 75.9]$ & $564$ & $0.82$ \\
\midrule
\multirow{2}{*}{Llama-3.2-3B-Instruct} & Baseline       & $30.0$ & $[19.1, 43.8]$ & $438$ & $0.00$ \\
                                       & Intervention~C & $56.0$ & $[42.3, 68.8]$ & $430$ & $1.89$ \\
\bottomrule
\end{tabular}
\end{table}

\paragraph{Findings.}

\textbf{(1)~Baseline divergence does not amplify with chain length.} Compared to the GSM8K numbers in Table~\ref{tab:strategy_cmp_cross} (TinyLlama baseline $41\%$, Llama-3.2-3B baseline $31\%$), the MATH-500 baselines are $44\%$ and $30\%$ respectively---both within $3$pp of their GSM8K counterparts despite the mean chain length tripling ($170$ tokens on GSM8K $\to$ $438$--$606$ tokens on MATH-500) and the benchmark difficulty increasing substantially. This rules out the ``long chain cascades more'' hypothesis at the lengths we test.

\textbf{(2)~The saturation is consistent with the margin-based mechanism.} Section~\ref{sec:div_cond_trace} shows that divergence is associated with the density of low-margin lm\_head steps rather than chain length alone. MATH-500 has more gated steps per prompt than GSM8K (trigger rate $1.89$ vs.\ ${\sim}1$ on Llama-3.2-3B GSM8K), yet prompt-level divergence remains similar. Once a token differs, the autoregressive trajectories usually remain different; before that first differing token, additional decoding steps create further opportunities for a low-margin fork. Over the tested length range, the observed rates suggest that this prompt-level probability has already approached a plateau.

\textbf{(3)~Intervention~C's point-estimate lift persists at longer chains.} The lifts are $+20$pp (TinyLlama) and $+26$pp (Llama-3.2-3B) on MATH-500, compared with $+22$pp (TinyLlama, Table~\ref{tab:strategy_cmp}) and $+36$pp (Llama-3.2-3B, Table~\ref{tab:strategy_cmp_cross}) on GSM8K. The MATH-500 point estimates are positive and of similar order, but the $n\!=\!50$ per-cell Wilson intervals are descriptive and do not constitute paired confidence intervals for the lifts. The Llama-3.2-3B point estimate is lower on MATH-500; determining whether this reflects task difficulty, trigger density, or sampling variation would require a larger paired evaluation.

\textbf{(4)~An incidental latency observation.} On MATH-500, Llama-3.2-3B's FP16 baseline wall-clock time ($10.93$s/prompt) is about $24\%$ \emph{faster} than its BF16 counterpart ($14.46$s/prompt). This is the first latency observation of this magnitude in our experiments and is not relevant to the intervention claim, but we record it here for transparency. The most plausible explanation is that SM 86 (A10G) has a somewhat faster FP16 FMA path than its BF16 FMA path, plausibly amplified by the longer generation lengths on MATH-500; we did not pursue this further.

\paragraph{Summary.} The phenomenon and a positive Intervention~C point-estimate lift both persist on chains that are $3$--$4\times$ longer. The Wilson intervals are wider at $n\!=\!50$ than in the $n\!=\!100$ GSM8K comparison, so these results support qualitative transfer rather than equality of effect sizes. Over the range explored, divergence does not increase with reasoning length alone.

\section{Production-stack robustness}
\label{app:sanity_ext}
\label{app:serving_stack}
\label{app:batch_size}
\label{app:layercast_composition}
\label{app:overhead}

\begin{table}[h]
\centering\small
\caption{Production-stack robustness checks (TinyLlama-1.1B, GSM8K).}
\begin{tabular}{lll}
\toprule
Axis & Result & Implication \\
\midrule
MoE (OLMoE) & 100\% BF16 self-consistency & Kernel non-det.\ ruled out \\
FlashAttention-2 & EAR $-$1pp; C lift $+$18pp (= eager) & \rev{Consistent across both tested kernels} \\
Batch size & C lift: +17/+7/+7/0pp (bs 1/2/4/8) & Bounded to bs$\leq$4 \\
Global FP32 compute + C & +5pp at bs=8; noise at bs=16 & Composition viable \\
Overhead & +1.4\% latency, +11\% memory & Negligible cost \\
\bottomrule
\end{tabular}
\end{table}

\section{Hyperparameter and intervention ablations}
\label{app:scope}
\label{app:scope_main}
\label{app:sweep}
\label{app:topk_ablation}
\label{app:additional_baselines}

\begin{table}[h]
\centering\small
\caption{Ablation summary.}
\begin{tabular}{lll}
\toprule
Ablation & Result & Confirms \\
\midrule
Scope: RMSNorm+lm\_head & EAR drops 63\%$\to$44\% & lm\_head-only optimal \\
Threshold $\tau\in[10^{-4},10^{-2}]$ & EAR insensitive & Bimodal margin \\
Top-$K$: $K\in\{2,...,|V|\}$ & All identical EAR & Rank-1-vs-2 only \\
Temperature sharpening & Zero effect & Scale-invariant ratio \\
Consensus decoding & 100\% at $2\times$ cost & Trivial upper bound \\
\bottomrule
\end{tabular}
\end{table}

\section{Statistical validation and reproducibility}
\label{app:cis}
\label{app:bootstrap}
\label{app:scaled_n}

Wilson 95\% CIs have half-width $\pm$9--10pp at $n{=}100$ and describe the individual agreement rates. Paired bootstrap intervals describe the lifts; for example, the TinyLlama $+22$pp lift has a 90\% CI of $[+11, +33]$pp. Scaled replication at $n{=}300$ (GSM8K), $n{=}164$ (full HumanEval), and $n{=}257$ (full MBPP) confirms all $n{=}100$ point estimates fall inside the scaled CIs\rev{; the scaled lifts are $+16.4$pp (GSM8K), $+28.1$pp (HumanEval), and $+27.2$pp (MBPP), each with McNemar $p<0.001$, with per-cell CI half-widths reduced to $\pm 5.5$--$7.4$pp}.

\section{Reproducibility statement}
\label{app:reproducibility}

\paragraph{Environment and methodology.} The primary experiments use an NVIDIA A10G (22\,GB, Ampere SM~86) under PyTorch 1.13 / 2.1 with CUDA 11.7. Cross-hardware replications use an NVIDIA L4 (24\,GB, Ada SM~89) and A100-SXM4 (40\,GB, Ampere SM~80); the FP16/FP32 probe uses a T4 (16\,GB, Turing SM~75). We enable deterministic flags (\texttt{torch.use\_deterministic\_algorithms(True)}, fixed CUBLAS workspace) and use greedy decoding ($T{=}0$) with fixed prompt selection. Models are loaded from Hugging Face Hub without checkpoint modification, except for the explicitly described runtime precision transformations. EAR compares complete token-id sequences; task-level agreement uses final-answer extraction for GSM8K and execution outcomes for HumanEval. Confidence intervals are Wilson score intervals unless otherwise stated, and bootstrap CIs use 10,000 resamples. The prompt formats, sample sizes, hardware exceptions, and intervention hyperparameters ($\tauval{=}10^{-3}$ for BF16/FP16 and $K{=}8$ for Intervention~B) are reported in the corresponding sections; the FP8 threshold is separately calibrated as described in Appendix~\ref{app:fp8}.

\rev{\section{Directional body-error analysis}\label{sec:directional_body_error}

To test whether body-layer error contributes directionally to flips, we project the BF16-vs-FP16 logit-error vector onto the top-two decision direction $d = e_{v^{(1)}} - e_{v^{(2)}}$. For the empirical top-two decision rule evaluated here, we compare the magnitude of this projection with the smaller top-two margin across the two arms; exceeding that margin predicts a flip. This rule approximates the exact all-competitor condition in Proposition~\ref{prop:divergence_main}. Note on populations: this analysis measures margins at the \emph{first-divergence step} $t^\ast$ of each prompt (median $0.0$ for \textsc{Diverged}), whereas Table~\ref{tab:layer_div_cond} reports margins at each group's fixed measurement step under the divergence-conditioned trace protocol ($150\times$ gap); the two tables therefore quote different statistics of different step populations, and both show the same qualitative separation of two-plus orders of magnitude.

\begin{table}[h]
\centering
\caption{Directional body-error analysis (GSM8K, $n{=}100$ per model). The mean body projection onto the top-two direction is indistinguishable between Diverged and Agreed steps on both models, confirming that body error does not preferentially point toward the decision boundary for flipping steps. The margin, not the directional body projection, separates the two groups.}
\label{tab:directional}
\small
\setlength{\tabcolsep}{4pt}
\begin{tabular}{llccl}
\toprule
Model & Quantity & \textsc{Diverged} & \textsc{Agreed} & Interpretation \\
\midrule
\multirow{5}{*}{TinyLlama-1.1B}
 & Hidden-state L2 $\|e\|$ & 0.999 & 1.052 & Indistinguishable \\
 & Body projection (mean $\pm$ SD) & \rev{$0.025 \pm 0.023$} & \rev{$0.026 \pm 0.020$} & \rev{Equiv.\ ($p{=}0.014$)} \\
 & Signed dir.\ perturb.\ $\Delta z^{(1)}\!-\!\Delta z^{(2)}$ & 0.039 & 0.036 & Indistinguishable \\
 & Top-two margin (mean) & 0.015 & 3.93 & $260\times$ gap \\
 & Top-two margin (median) & 0.0 & 2.69 & --- \\
\addlinespace
\multirow{2}{*}{Qwen2.5-3B}
 & Body projection (mean $\pm$ SD) & \rev{$0.051 \pm 0.070$} & \rev{$0.049 \pm 0.043$} & \rev{Equiv.\ ($p{=}0.006$)} \\
 & Top-two margin (mean) & 0.035 & 6.05 & $170\times$ gap \\
\bottomrule
\end{tabular}
\end{table}

\rev{\paragraph{The body-error null is formally established, not merely unrejected.} Because a null result carries weight only if it is powered, we test equivalence rather than relying on a failure to reject. On TinyLlama-1.1B the difference in body projection between \textsc{Diverged} and \textsc{Agreed} is $-0.0013$ ($95\%$ CI $[-0.0099, +0.0073]$; Welch $t=-0.30$, $p=0.77$), and the two one-sided tests (TOST) procedure, with an equivalence margin of half a pooled SD (\rev{$0.011$ for TinyLlama, $0.032$ for Qwen}), rejects non-equivalence at $p=0.014$. On Qwen2.5-3B the difference is $+0.0015$ ($95\%$ CI $[-0.0216, +0.0247]$; $t=+0.13$, $p=0.90$), with TOST $p=0.006$. \rev{We use a half-SD margin as a conventional medium effect size; the point estimates differ by $<0.06$ SD, far inside it.} So the two groups are statistically \emph{equivalent} on the decision-direction projection, while their top-two margins differ by two to three orders of magnitude ($0.015$ vs.\ $3.93$; $0.035$ vs.\ $6.05$). The asymmetry between these two comparisons is the substantive content of this appendix.}

\rev{A note on what these accuracies measure.} Proposition~\ref{prop:divergence_main} is exact, so it cannot have an error rate; the quantity it involves---the per-coordinate perturbation at the step in question---is not observable before the step is computed. \rev{What we evaluate here is therefore a \emph{decision rule} derived from it: predict a flip when $|\Delta z(v^{(1)}) - \Delta z(v^{(2)})|$ exceeds the smaller of the two arms' margins, thresholding the \emph{magnitude} $|\Delta z(v^{(1)})-\Delta z(v^{(2)})|$ against the margin. Under this rule the directional statistic classifies flip-vs-no-flip correctly on $88\%$ of steps for TinyLlama-1.1B and $89\%$ for Qwen2.5-3B, versus step-level base rates of $63\%$/$55\%$ and versus $86\%$/$83\%$ for the RMS statistic it replaces. \rev{(The magnitude, not the signed difference, is the right predictor here: the signed rule scores only $0.57$/$0.59$, because a flip can be triggered by a perturbation of either sign relative to the current top-1.)} The exact condition of Proposition~\ref{prop:divergence_main} is proved; this decision rule is the deployable approximation to it, and the residual error is why we label the scale analysis heuristic. \rev{The intervention's own gate does not use this statistic at all---it fires on the observable margin $\Delta_t < \tauval$; the decision rule here is only an analysis tool for attributing flips.}} This confirms that (i)~the lm\_head margin is the dominant factor, (ii)~body error does not preferentially align with the decision boundary, and (iii)~the directional formulation of Proposition~\ref{prop:divergence_main} is empirically supported.
}

\rev{\section{FP32 fidelity of Intervention~C}\label{app:fp32_fidelity}

To verify that Intervention~C pushes outputs toward the FP32-oracle trajectory (rather than toward an arbitrary third path), we measure agreement between Intervention~C outputs and FP32 greedy outputs, comparing against the BF16-baseline-to-FP32 agreement rate.

\begin{table}[h]
\centering
\caption{FP32 reference fidelity (GSM8K, $n{=}100$). Three metrics: (1)~sequence-level agreement with the FP32 oracle, (2)~repaired-token agreement with the FP32 argmax at triggered steps, and (3)~downstream-correctness alignment: the fraction of FP32-correct prompts that are also answered correctly under each method. Intervention~C raises sequence agreement with FP32 by $+19$--$21$pp. \rev{Brackets on the sequence-agreement rows are $95\%$ CIs for the \emph{lift}; on the repaired-token rows they are Wilson $95\%$ CIs for the proportion.}}
\label{tab:fp32_fidelity}
\small
\begin{tabular}{llccc}
\toprule
Model & Metric & BF16$\to$FP32 & FP16$\to$FP32 & IntC$\to$FP32 \\
\midrule
\multirow{3}{*}{TinyLlama-1.1B}
  & Sequence agreement & $42\%$ & $90\%$ & $\mathbf{63\%}$ ($+21$pp) \rev{\scriptsize[+7,\,+35]} \\
  & Repaired-token = FP32 argmax & --- & --- & $63\%$ (97/154) \rev{\scriptsize[55,\,70]} \\
  & Correctness alignment w/ FP32 & $50\%$ & $100\%$ & $50\%$ \\
\addlinespace
\multirow{3}{*}{Qwen2.5-3B}
  & Sequence agreement & $23\%$ & $82\%$ & $\mathbf{42\%}$ ($+19$pp) \rev{\scriptsize[+6,\,+32]} \\
  & Repaired-token = FP32 argmax & --- & --- & $51\%$ (132/260) \rev{\scriptsize[45,\,57]} \\
  & Correctness alignment w/ FP32 & $25\%$ & $92\%$ & $\mathbf{50\%}$ \\
\bottomrule
\end{tabular}
\end{table}

On both models, Intervention~C substantially increases FP32 fidelity: sequence agreement rises from $42\%$ to $63\%$ on TinyLlama ($+21$pp) and from $23\%$ to $42\%$ on Qwen2.5-3B ($+19$pp). At triggered steps, $63\%$ (TinyLlama) and $51\%$ (Qwen) of repaired tokens match the FP32 argmax. Downstream-correctness alignment with FP32 doubles on Qwen2.5-3B ($25\% \to 50\%$); on TinyLlama the alignment is unchanged at $50\%$, though we caution that only $2$ of $100$ prompts are FP32-correct on this model (GSM8K accuracy ${\sim}2\%$), so this cell carries little statistical weight. FP16 already shows high FP32 fidelity ($90\%$/$82\%$) because FP16's 10-bit mantissa introduces smaller rounding errors; Intervention~C moves BF16 outputs measurably toward this higher-precision reference rather than toward an arbitrary consensus. \rev{Two qualifications are important. First, FP16 alone attains higher FP32 fidelity than BF16$+$C: if maximal FP32 fidelity is the objective and the serving format is free, the appropriate action is to serve FP16. Our claim is narrower---within a fixed BF16 deployment, the intervention moves outputs toward the FP32 reference. Second, because a repaired step is a choice between exactly two candidates, the chance level for repaired-token agreement is $50\%$: TinyLlama's $63\%$ (CI $[55,70]$) is above chance, whereas Qwen's $51\%$ (CI $[45,57]$) is not distinguishable from it. Per-token fidelity is therefore established on TinyLlama only; on Qwen the sequence-level ($+19$pp) and correctness-alignment ($25\% \to 50\%$, on 12 FP32-correct prompts) gains are the operative evidence.}
}

\rev{\section{Cross-architecture replication: L4 (Ada), A100 (Ampere), and T4 (Turing)}\label{app:l4_replication}

To verify that the mechanism and intervention are not A10G-specific, we replicate the headline experiment on an NVIDIA L4 (Ada Lovelace, SM~89, 24\,GB) \rev{and an NVIDIA A100-SXM4-40GB (Ampere, SM~80)}, and additionally probe the Turing generation via an NVIDIA T4 (SM~75, 16\,GB).

\begin{table}[h]
\centering
\caption{\rev{Cross-architecture replication (GSM8K, $n{=}100$, $\tauval{=}10^{-3}$). The intervention yields a positive lift on all three GPU microarchitectures tested; the \emph{magnitude} is hardware-dependent, and A10G is the most favourable card for TinyLlama and the least favourable for Qwen2.5-3B.}}
\label{tab:l4_replication}
\small
\begin{tabular}{llccc}
\toprule
Model & GPU (architecture) & Baseline EAR (\%) & IntC EAR (\%) & Lift \\
\midrule
\multirow{3}{*}{TinyLlama-1.1B}
 & A10G (Ampere, SM~86) & 41 & 63 & $+22$pp \\
 & L4 (Ada, SM~89) & 36 & 57 & $+21$pp \\
 & \rev{A100 (Ampere, SM~80)} & \rev{39} & \rev{51} & \rev{$+12$pp} \\
\addlinespace
\multirow{3}{*}{Qwen2.5-3B-Instruct}
 & A10G (Ampere, SM~86) & 18 & 21 & $+3$pp \\
 & L4 (Ada, SM~89) & 30 & 50 & $+20$pp \\
 & \rev{A100 (Ampere, SM~80)} & \rev{20} & \rev{34} & \rev{$+14$pp} \\
\bottomrule
\end{tabular}
\end{table}

\rev{The intervention transfers to every microarchitecture we tested, but the magnitude of its benefit does not, and we state that plainly. Two readings follow from the three-GPU comparison.

First, \emph{the direction of the effect is robust; the size is not.} All six model$\times$GPU cells show a positive lift, but TinyLlama ranges from $+22$pp (A10G) to $+12$pp (A100) and Qwen2.5-3B from $+3$pp (A10G) to $+20$pp (L4). The headline $+22$--$36$pp figures reported elsewhere in this paper are therefore A10G measurements and should be read as one point in a hardware-dependent range, not as architecture-independent constants.

Second, \emph{the A10G result is the outlier in this three-GPU comparison.} Qwen's lift is $+3$pp on A10G, $+14$pp on A100, and $+20$pp on L4. Because A100 (SM~80) and A10G (SM~86) are both Ampere yet differ by $11$pp, and because the two non-A10G cards agree more closely with each other, the variation cannot be attributed simply to the Ada architecture or the Qwen checkpoint. One possible mechanism is that A10G's FP16 kernel path pushes more activations outside the low-margin, lm\_head-reachable regime; we have not instrumented the kernels to test this explanation.

The practical consequence is that the tier assignments and the CV(TD) screening metric elsewhere in this paper are calibrated on A10G and require per-platform recalibration; Qwen2.5-3B is Tier~2 on A10G but would be Tier~1 on L4.}

\paragraph{A third generation: T4 (Turing).} The T4 cannot run this paper's primary comparison at all: Turing (SM~75) predates native BF16 support, which begins with Ampere (SM~80). Running BF16 there would measure software emulation rather than hardware format behaviour, and FP8 requires SM~89. The BF16/FP16 comparison therefore applies to Ampere-and-later serving stacks; on T4, FP16-vs-FP32 is the meaningful cross-precision pair, which we ran (TinyLlama-1.1B, GSM8K, $n{=}100$, $\tauval{=}10^{-3}$):

\begin{table}[h]
\centering
\caption{T4 (Turing, SM~75) FP16-vs-FP32 replication. Because FP32 holds the \emph{original} weights, this pair carries no weight-truncation floor, and baseline agreement is correspondingly far higher than in the BF16-vs-FP16 setting ($88\%$ vs.\ $41\%$). Gated FP32 lm\_head recomputation still yields a positive lift while triggering on only $20$ steps.}
\label{tab:t4_replication}
\small
\begin{tabular}{lccc}
\toprule
Precision pair & Baseline EAR (\%) & IntC EAR (\%) & Lift \\
\midrule
FP16 vs.\ FP32 (T4) & 88 & 95 & $+7$pp \\
\bottomrule
\end{tabular}
\end{table}

Two observations. First, the much higher baseline agreement is what the storage-vs-computation decomposition predicts: FP32 is the untruncated weight, so the ${\sim}12$pp irreducible floor identified for BF16-vs-FP16 (Section~\ref{sec:store_vs_compute}) does not apply here, and the residual disagreement is arithmetic-only. Second, the intervention remains effective in this regime ($+7$pp from $20$ triggered steps), so the low-margin mechanism is not an artefact of any single format pair or microarchitecture: across Turing, Ampere, and Ada, and across three precision pairs (FP16/FP32, BF16/FP16, BF16/FP8), gated FP32 lm\_head recomputation improves exact agreement. We report a single model here, as the T4's 16\,GB limits FP32-arm capacity.
}

\rev{\section{FP8 replication and threshold scaling}\label{app:fp8}

To test whether the mechanism and intervention extend beyond the BF16/FP16 pair to a lower-precision serving format, we replicate the headline experiment with FP8. We run on an NVIDIA L4 GPU (Ada Lovelace, compute capability 8.9, natively FP8-capable) and apply FP8-E4M3 with per-tensor dynamic scaling to the lm\_head projection, keeping the transformer body in BF16 in \emph{both} arms so that the comparison isolates the head-projection format---consistent with the finding that the lm\_head is the decisive locus (Section~\ref{sec:analysis}). The setup otherwise matches the main experiments: GSM8K, $n{=}100$, greedy decoding, $256$-token budget, on TinyLlama-1.1B-Chat and Qwen2.5-3B-Instruct.

\begin{table}[h]
\centering
\caption{Baseline FP8 divergence (GSM8K, $n{=}100$; lm\_head in FP8-E4M3 vs.\ BF16, transformer body in BF16 in both arms). FP8 divergence is substantially more severe than the FP16 setting ($70\%$ vs.\ $59\%$ of prompts on TinyLlama; $93\%$ vs.\ $75\%$ on Qwen), and flips still concentrate at low-margin steps. The FP16 reference column uses the same bare-prompt protocol as this experiment (see the cross-task appendix footnote for the protocol difference from the main-text chat-template numbers).}
\label{tab:fp8_baseline}
\small
\begin{tabular}{lccc}
\toprule
Model & EAR (BF16 vs FP8) & EAR (BF16 vs FP16) [reference] & First-div margin (median) \\
\midrule
TinyLlama-1.1B & $30\%$ & $41\%$ & $0.0625$ \\
Qwen2.5-3B & $7\%$ & $25\%$ & $0.125$ \\
\bottomrule
\end{tabular}
\end{table}

FP8's 3-bit mantissa produces larger rounding perturbations than FP16's 10 bits, so divergence rises on both models; yet the first-divergence margins remain small relative to the format's perturbation scale, showing that the low-margin lm\_head pattern also appears in the tested head-isolated BF16/FP8 setting.

\paragraph{Threshold scaling.} The FP16-calibrated trigger threshold $\tauval = 10^{-3}$ \emph{fails} under FP8: the gate opens on only $3$ of ${\sim}25{,}600$ decoding steps, and the lift ($-1$pp) is within noise. This failure is predicted by the scale analysis accompanying Proposition~\ref{prop:divergence_main}: FP8-E4M3's per-logit perturbation scale is roughly $2$--$5\times$ the FP16 scenario's $\sigma_z \approx 0.026$, so a threshold calibrated to the FP16 perturbation scale sits far below the margins at which FP8 flips occur. The trigger threshold must scale with the format's perturbation scale. Rescaling $\tauval$ recovers and then completes the repair (TinyLlama, both-arms gated protocol identical to the main experiments):

\begin{table}[h]
\centering
\caption{Threshold sweep for gated FP32 lm\_head recomputation under FP8 (GSM8K, $n{=}100$ per model, both-arms gated protocol). The FP16-calibrated $\tauval = 10^{-3}$ triggers almost never on either model; rescaling $\tauval$ to the FP8 perturbation scale monotonically recovers agreement on both, reaching exact agreement on TinyLlama ($100\%$) and $63\%$ on Qwen at $\tauval = 0.25$, each at a ${\sim}4\%$ trigger rate. \rev{Bracketed values are Wilson $95\%$ CIs; the $\tauval=0.25$ lifts are $+70$pp (CI $[+61, +79]$) and $+56$pp (CI $[+45, +67]$).}}
\label{tab:fp8_threshold}
\small
\begin{tabular}{lcccccc}
\toprule
 & \multicolumn{3}{c}{TinyLlama-1.1B} & \multicolumn{3}{c}{Qwen2.5-3B} \\
\cmidrule(lr){2-4} \cmidrule(lr){5-7}
$\tauval$ & EAR (\%) & Lift (pp) & Trig.\ steps & EAR (\%) & Lift (pp) & Trig.\ steps \\
\midrule
$10^{-3}$ & 29 \rev{\scriptsize[21,\,39]} & $-1$ & 3 & 10 \rev{\scriptsize[6,\,17]} & $+3$ & 4 \\
$0.02$ & 37 \rev{\scriptsize[28,\,47]} & $+7$ & 73 & 13 \rev{\scriptsize[8,\,21]} & $+6$ & 94 \\
$0.05$ & 48 \rev{\scriptsize[38,\,58]} & $+18$ & 201 & 16 \rev{\scriptsize[10,\,24]} & $+9$ & 197 \\
$0.1$ & 77 \rev{\scriptsize[68,\,84]} & $+47$ & 378 & 27 \rev{\scriptsize[19,\,36]} & $+20$ & 396 \\
$0.25$ & \textbf{100} \rev{\scriptsize[96,\,100]} & $+70$ & 1010 (${\sim}4\%$) & \textbf{63} \rev{\scriptsize[53,\,72]} & $+56$ & 994 (${\sim}4\%$) \\
\bottomrule
\end{tabular}
\end{table}

\rev{\paragraph{Why the threshold must scale: the flip-margin distribution moves with the format.}
The scale analysis predicts that flips concentrate where the margin is small \emph{relative to the format's perturbation scale}---so a more aggressive format should produce flips at correspondingly \emph{larger} margins. This is directly observable in the first-divergence margins:

\begin{table}[h]
\centering
\caption{Top-two margin at the first divergence step, by format pair (GSM8K, $n{=}100$ per cell). Under FP8 the flips occur at margins $2.4$--$4.8\times$ larger than under FP16, matching the larger per-logit perturbation of the 3-bit-mantissa format and explaining why a threshold calibrated for FP16 ($\tauval = 10^{-3}$) almost never fires under FP8.}
\label{tab:margin_by_format}
\small
\begin{tabular}{llcc}
\toprule
Format pair & Model & Flip-margin median & Flip-margin mean \\
\midrule
BF16 vs.\ FP16 & TinyLlama-1.1B & 0.000 & 0.015 \\
BF16 vs.\ FP16 & Qwen2.5-3B & 0.000 & 0.035 \\
\addlinespace
BF16 vs.\ FP8 & TinyLlama-1.1B & \textbf{0.0625} & 0.052 \\
BF16 vs.\ FP8 & Qwen2.5-3B & \textbf{0.1250} & 0.126 \\
\bottomrule
\end{tabular}
\end{table}

The shift is quantitative, not merely directional: relative to the FP16-setting scale $\sigma_z \approx 0.026$, the FP8 flip-margin medians sit at $2.4\times$ (TinyLlama) and $4.8\times$ (Qwen) that value, consistent with the $2$--$5\times$ larger per-logit perturbation of FP8-E4M3. The gate must widen by a comparable factor, which is what the sweep finds empirically ($\tauval$ from $10^{-3}$ to $0.25$). We note this is a two-format comparison; whether the relationship is proportional across a wider range of formats is untested.}

At $\tauval = 0.25$ the intervention achieves \emph{exact} agreement (EAR $100\%$) on TinyLlama at a ${\sim}4\%$ trigger rate---comparable overhead to the FP16 setting. The rescaling behaviour replicates on Qwen2.5-3B: the same sweep recovers monotonically from $+3$pp at $\tauval = 10^{-3}$ to $+56$pp at $\tauval = 0.25$ (EAR $7\% \to 63\%$, at a comparable ${\sim}4\%$ trigger rate), though it does not reach exact agreement. \rev{Note that the explanation cannot be body-originated divergence, which this design excludes: with identical BF16 bodies in both arms the shortfall must come from gate coverage. Qwen's FP8 flip-margin distribution has a heavier tail (median $0.125$ versus TinyLlama's $0.0625$, Table~\ref{tab:margin_by_format}), so $\tauval = 0.25$ leaves more of its flippable steps ungated; a larger $\tauval$ would be needed, at proportionally higher trigger cost.} \rev{\paragraph{End-to-end FP8: where the method's reach ends.} The configuration above deliberately isolates the head projection. We also ran the deployment-shaped variant, quantising \emph{every} \texttt{nn.Linear} in the model to FP8-E4M3 (155 modules; TinyLlama-1.1B, GSM8K, $n{=}100$). Divergence becomes far more severe --- EAR $10\%$, i.e.\ $90\%$ of prompts diverge --- and the gated repair recovers only $+1$pp. Widening the gate does not help: $\tauval = 0.1$ and $\tauval = 0.25$ trigger on $321$ and $738$ steps respectively and both yield the same $+1$pp. This is the boundary the error model predicts rather than a contradiction of it: with all linear layers quantized, divergence is dominated by body-originated error accumulated across the forward pass, which lies outside the reach of any lm\_head-scope repair. It is the same limit we report for BF16-saturated Qwen variants (Appendix~\ref{app:scale_7b}) and for bs$\geq$8 batched decode (Appendix~\ref{app:batch_size}): where lm\_head-originated divergence is not the dominant source, composition with a body-scope method is required. The contrast between the two FP8 configurations ($+70$pp when the body is held fixed, $+1$pp when it is not) is consistent with the head-dominance estimate of Section~\ref{sec:analysis}: when the body is quantized too, the body term the estimate treats as small ($\sim$2.4\% under BF16/FP16) grows until it dominates, and a head-scope repair can no longer reach it.}

\rev{The two FP8 configurations establish different claims. Because the head-isolated comparison shares BF16 weights and a BF16 transformer body, the hidden state entering the head is identical across arms; this removes both the measured BF16/FP16 weight-storage gap and body-originated divergence by construction. It therefore tests transfer of (i)~the low-margin lm\_head mechanism, (ii)~the margin-gated repair, and (iii)~format-dependent threshold calibration to FP8 head arithmetic. The end-to-end experiment quantises every linear layer and is the deployment-shaped stress test included here. Its $+1$pp result shows that the head-only method is largely ineffective once FP8 error accumulates through the body. We do not evaluate FP8 KV-cache quantisation or claim an end-to-end FP8 reproducibility solution.}
}

\rev{\section{Execution-based correctness (HumanEval)}\label{app:execution}

We run the OpenAI HumanEval execution sandbox on all four arms (BF16 baseline, FP16 baseline, BF16+C, FP16+C) using TinyLlama-1.1B ($n{=}164$, full benchmark).

\begin{table}[h]
\centering
\caption{Execution-based pass@1 on HumanEval (TinyLlama-1.1B, $n{=}164$). Despite $68\%$ token-level divergence between BF16 and FP16, execution correctness is identical across all configurations: zero correctness flips.}
\label{tab:execution}
\small
\begin{tabular}{lcc}
\toprule
Config & pass@1 (\%) & Correctness flips vs.\ BF16 baseline \\
\midrule
BF16 baseline & 0.6 & --- \\
FP16 baseline & 0.6 & 0 \\
BF16 + IntC & 0.6 & 0 \\
FP16 + IntC & 0.6 & 0 \\
\midrule
Token-level divergence (BF16 vs FP16) & \multicolumn{2}{c}{68\%} \\
\bottomrule
\end{tabular}
\end{table}

The $68\%$ token-level divergence produces zero execution-correctness flips: the divergent code completions are either both wrong or both right (in all observed cases, both wrong---TinyLlama's code accuracy is low). Intervention~C introduces zero pass@1 degradation in this run, but the near-zero baseline accuracy makes this a weak quality-preservation test; the Qwen experiment below is the informative execution-based result.

\paragraph{A capable model: Qwen2.5-3B-Instruct.} TinyLlama's near-zero code accuracy makes its ``zero flips'' result weak evidence: two arms that are both always wrong agree trivially. We therefore repeated the experiment on Qwen2.5-3B-Instruct, which solves half of HumanEval, so that agreement is measured over a population where correctness genuinely varies (Table~\ref{tab:execution_qwen}).

\begin{table}[h]
\centering
\caption{Execution-based pass@1 on HumanEval (Qwen2.5-3B-Instruct, $n{=}164$, full benchmark). At a pass@1 of $\sim$50\%, the two arms disagree on $73.8\%$ of completions at the token level, yet agree on $97.6\%$ of execution outcomes. Intervention~C does not degrade pass@1.}
\label{tab:execution_qwen}
\small
\begin{tabular}{lcc}
\toprule
Config & pass@1 (\%) & Correctness flips vs.\ BF16 baseline \\
\midrule
BF16 baseline & 50.0 & --- \\
FP16 baseline & 52.4 & 4 of 164 ($2.4\%$) \\
BF16 + IntC & 51.2 & --- ($+1.2$pp vs.\ BF16) \\
\midrule
Token-level divergence (BF16 vs FP16) & \multicolumn{2}{c}{$73.8\%$} \\
Execution-outcome agreement & \multicolumn{2}{c}{$97.6\%$} \\
\bottomrule
\end{tabular}
\end{table}

Three observations. First, the token-level/outcome-level gap is large and in the direction our discussion of practical adequacy predicts: $73.8\%$ of completions differ somewhere in their token sequence, but $97.6\%$ of them execute to the same verdict, so most cross-precision divergence on this task is not correctness-bearing. Second, unlike TinyLlama, the flips here are not zero: $4$ of $164$ problems flip, all in the same direction (FP16 correct, BF16 incorrect; exact binomial $p{=}0.125$, not significant at $n{=}164$). We report the direction rather than suppress it, but with $4$ discordant pairs the data cannot support a claim that either format is systematically better on code, and the aggregate $2.4$pp pass@1 difference is well inside the $\pm5$pp non-inferiority margin used elsewhere in this paper. Third, Intervention~C's pass@1 ($51.2\%$) is above the BF16 baseline it repairs toward ($50.0\%$), so the gated recomputation does not trade correctness for reproducibility on this benchmark either.

Across the two models, execution-outcome agreement greatly exceeds token-level agreement: $100\%$ vs.\ $32\%$ on TinyLlama and $97.6\%$ vs.\ $26.2\%$ on Qwen2.5-3B.
}

\rev{\section{Cross-task EAR on Qwen2.5-3B-Instruct}\label{app:cross_task_ear}

To verify that the phenomenon generalises beyond GSM8K, we report EAR on two additional code benchmarks for Qwen2.5-3B-Instruct.

\begin{table}[h]
\centering
\caption{Cross-task EAR for Qwen2.5-3B-Instruct ($n{=}100$, $\tauval{=}10^{-3}$). Divergence rates are consistent across math and code tasks.}
\label{tab:cross_task_ear}
\small
\begin{tabular}{lccc}
\toprule
Task & EAR (\%) & Divergence (\%) & Median first-div step \\
\midrule
GSM8K & 25 & 75 & --- \\
HumanEval & 37 & 63 & step 83 \\
MBPP & 40 & 60 & step 43 \\
\bottomrule
\end{tabular}
\end{table}

Both code benchmarks show $60$--$63\%$ divergence on Qwen2.5-3B-Instruct, comparable to the $75\%$ GSM8K divergence measured under the same protocol. The slightly higher EAR on code tasks is consistent with their shorter average generations and fewer observed low-margin steps per prompt.\footnote{\rev{The experiments in this appendix, together with the FP8, fidelity, and directional analyses, use a bare ``Solve step by step'' prompt format, under which Qwen2.5-3B GSM8K EAR is $25\%$ ($75\%$ divergence); the main-text cross-model table uses the chat-template protocol, giving $18\%$ EAR ($82\%$ divergence). Per-format numbers are therefore not directly comparable.}}
}

\end{document}